\documentclass[lettersize,journal]{IEEEtran}

\usepackage{amsmath,amssymb,amsfonts}
\usepackage{graphicx}
\usepackage{textcomp}
\usepackage[dvipsnames]{xcolor}
\usepackage{multirow}
\usepackage{booktabs}
\usepackage{siunitx}
\usepackage[ruled,vlined]{algorithm2e}
\usepackage{pgfplots}
\pgfplotsset{width=1\linewidth,compat=1.9}
\pgfplotsset{compat=newest}
\usepackage{tikz}
\usepackage{array}
\usepackage{stfloats}
\usepackage{url}
\usepackage{verbatim}

\usepackage{subcaption}

\usepackage{amsthm}

\newtheorem{theorem}{Theorem}
\newtheorem{lemma}{Lemma}
\newtheorem{assumption}{Assumption}

\newcommand{\our}{FLTP-DR}
\newcommand{\nbo}[1]{{\sf\color{orange}[#1]}}
\newcommand{\ls}[1]{\nbo{LS: #1}}
\newcommand{\yx}[1]{\nbo{YX: #1}}

\newcommand{\blue}[1]{{\sf\color{blue}[#1]}}

\title{Active Client Selection in Federated Trajectory Prediction with Uncertainty-Awareness and Heterogeneous Complexity}

\author{Yiming~Xie, Muzi~Peng, Fei~Miao, Ningfang~Mi, and Lili~Su%
\thanks{Y. Xie, M. Peng, N. Mi, and L. Su are with Northeastern University, Boston, MA, USA (e-mail: xie.yimi@northeastern.edu; peng.mu@northeastern.edu; ningfang@ece.neu.edu; l.su@northeastern.edu).}%
\thanks{F. Miao is with the University of Connecticut, Storrs, CT, USA (e-mail: fei.miao@uconn.edu).}%
}

\begin{document}

\maketitle

\begin{abstract}
Training sequence models (e.g., transformers) is now standard for autonomous vehicle trajectory prediction. Yet, assembling high-quality centralized datasets is challenging because real-world vehicles' trajectories are fragmented across regions and vehicles. Federated Learning (FL) offers a natural alternative. However, its application to trajectory prediction presents two distinctive challenges: (i) \emph{high scene uncertainty}, which arises from trajectories or map ambiguity in individual traffic scenes, 
and (ii) \emph{cross-scene complexity heterogeneity}, which is driven by the diversity of map topology, traffic density, agent composition, and driving behaviors.  
By incrementally establishing awareness of scene uncertainty and complexity heterogeneity, we propose 
a family of \emph{active client selection} methods for FL that guide client selection toward the most informative ones. 
Our uncertainty-aware selectors prioritize clients using (1) per-client 
negative log-likelihood (NLL) under an uncertainty-aware global objective and (2) estimated aleatoric uncertainty.
Inspired by the intuition that more complex scenes often contain knowledge that can be transferred down to easier scenes, we further develop a selector (named {\em \our}) that  
jointly accounts for scene complexity and uncertainty.

Experiments on Argoverse demonstrate that even vanilla federated trajectory prediction surpasses locally trained models. Uncertainty-aware selection accelerates convergence and improves core metrics, including minADE, minFDE, and MR. When clients' local dataset exhibits significant heterogeneity in scene complexities, 
{\our} attains the best generalization while further speeding convergence, confirming 
the conjecture 
that training on more complex scenes benefits performance on easier 
cases. As a byproduct, we provide a reproducible setup for constructing highly heterogeneous federated partitions of trajectory datasets, which may be of independent interest.

\end{abstract}


\begin{IEEEkeywords}
Federated Learning, connected autonomous vehicles, trajectory prediction, uncertainty-aware estimation, data complexity ranking, client selection.
\end{IEEEkeywords}

\section{Introduction}

\IEEEPARstart{A}{ccurate} trajectory prediction of surrounding objects is crucial for autonomous driving. Recently, 
complex deep networks, such as transformers,  
have become the method of choice for trajectory prediction \cite{bansal2018chauffeurnet,cui2019multimodal,djuric2020uncertainty,liang2020learning,gao2020vectornet,zhou2022hivt}. Due to the data-intensive nature of deep learning, existing methods implicitly assume the availability of large-scale, centralized datasets \cite{zhou2022hivt,liang2020learning,ngiam2021scene,gu2021densetnt,liu2021multimodal,weng2022whose,zhang2022adversarial,bahari2022vehicle}. However, real-world data is often fragmented across devices.  
In this paper, we leverage Federated Learning (FL) on Connected Autonomous Vehicles (CAVs) to relax the reliance on centralized data for high-precision model training. 
FL is a promising decentralized learning paradigm \cite{mcmahan2017communication,kairouz2021advances} that allows CAVs to collaboratively train a global model without exchanging raw data. 
Under FL, each vehicle locally trains the model with its local dataset and periodically sends updates (e.g., model weights) to a central parameter server, which aggregates them into an enhanced global model and redistributes it for continued local training. This approach effectively leverages diverse local data to improve generalization. 

Trajectory prediction in FL faces two distinctive challenges: (i) \emph{high scene uncertainty} and (ii) \emph{cross-scene complexity heterogeneity}, illustrated in Figure~\ref{fig:uncertainty_heterogeneity}. 
The left subfigure of Figure~\ref{fig:uncertainty_heterogeneity} showcases \emph{high scene uncertainty}. At a four-way intersection, the ego vehicle may execute multiple valid driving maneuvers, such as going straight, turning left, and turning right. Based on the past trajectory, it is ambiguous for the ego vehicle to predict the exact trajectory. 
The three curves depict these plausible future trajectories. 
The small orange rectangle marks a potentially out-of-distribution (OOD) zone (e.g.\, partially blocked construction zone) where ambiguity increases. The right subfigure of Figure~\ref{fig:uncertainty_heterogeneity} illustrates the \emph{cross-scene complexity heterogeneity}. Urban, suburban, and highway clients possess different maps and traffic statistics (intersection proximity, traffic density, turn frequencies, and lane merges). 
In particular, metropolitan traffic scenes are generally more complex than rural scenes, characterized by denser traffic, more frequent intersections, and richer pedestrian interactions, whereas rural scenes typically involve lighter traffic and fewer interactions. Furthermore, regional differences (e.g., snowy regions versus warmer climates) give rise to diverse driving behaviors. 
%
%
\begin{figure}[!t]
    \centering
    \includegraphics[width=0.9\columnwidth]{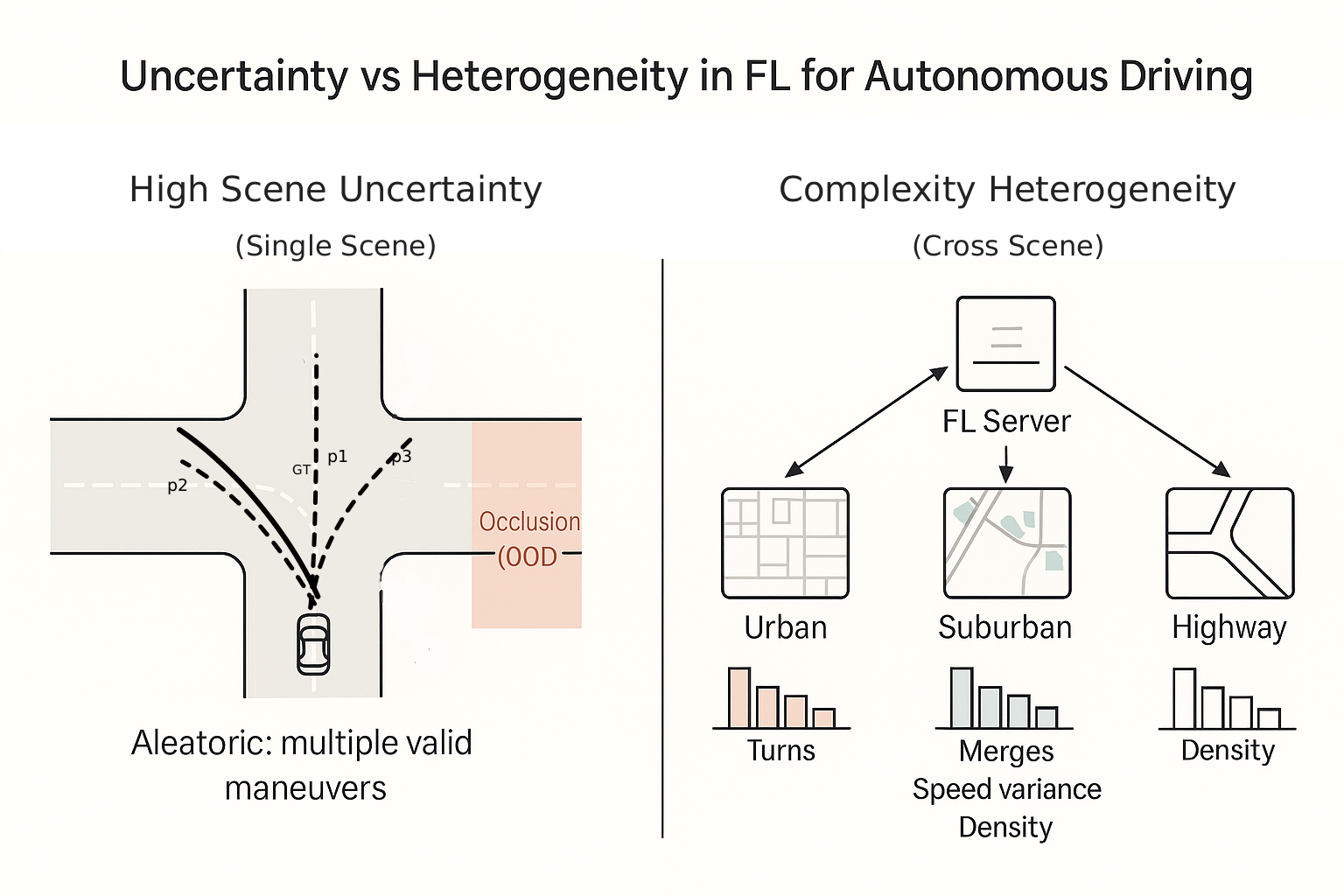}
    \caption{Uncertainty and heterogeneity in FL for autonomous driving. 
    Left: uncertainty in individual traffic scenes---the past trajectory of a vehicle may admit multiple valid futures and exhibit higher uncertainty in OOD scenes. 
    Right: scene complexity heterogeneity--arising from different map and traffic pattern distributions (urban, suburban, highway).}
    \label{fig:uncertainty_heterogeneity}
\end{figure}


Traditional FL algorithms randomly sample clients for aggregation \cite{mcmahan2017communication, li2020federated, bonawitz2019towards}. 
Despite resilience to non-IID data having been studied intensively in the FL literature \cite{kairouz2021advances}, the scene complexity heterogeneity we encounter necessitates a fundamental rethinking of existing methods. This is because, due to attention mechanisms and intricate agent–agent interactions in traffic scenes, weight training 
of many transformer-based trajectory prediction models vary with scene complexity -- particularly traffic density \cite{zhou2022hivt,hpnet,wangtowards,yang2024sstp}. As a result, their theoretical guarantees against data heterogeneity break down, since the widely adopted {\em bounded gradient dissimilarity} assumption \cite{mcmahan2017communication,li2020federated,karimireddy2020scaffold,stich2019local,woodworth2020local,wang2022unreasonable} is no longer feasible.
%
%
These considerations motivate active client selection that prioritizes clients based on informativeness criteria tied to \emph{uncertainty} and \emph{complexity heterogeneity}.

\noindent\textbf{Contributions.}
To address these gaps, we introduce a comprehensive FL framework tailored to trajectory prediction with explicit treatment of uncertainty and cross-client complexity heterogeneity. Our contributions are:

\begin{enumerate}
\item  \textbf{FLTP: Federated Trajectory Prediction with an uncertainty-aware objective.}
 We present \emph{FLTP}, which trains trajectory forecasters in a decentralized manner without sharing raw data.  
FLTP adopts a negative log-likelihood (NLL) objective (e.g., a Laplace-mixture head) to model multi-modality and quantify aleatoric uncertainty (AU), paving the way for uncertainty-awareness. 


\item \textbf{Active client selection under partial client participation.}
We provide a family of active client selection methods 
that incrementally incorporates uncertainty-awareness and scene complexity heterogeneity.  
For uncertainty, we develop FLTP-AU and FLTP-NLL. FLTP-AU prioritizes clients with medium \emph{aleatoric uncertainty}, treating AU as the primary signal of scene ambiguity. \textbf{FLTP-NLL} selects clients with larger per-client negative log-likelihood under a Laplace–mixture objective, indicating poorer fit and elevated ambiguity. 
To complement uncertainty-awareness, {\our} incorporates heterogeneity-awareness by combining a complexity rank constructed from interpretable features (turn frequency, intersection proximity, lane merges, speed variance, and interaction density) with local training loss to emphasize challenging, learnable data. 

\item \textbf{Complexity-aware scoring \& realistic federated partitioning.} 
We introduce an interpretable \emph{trajectory complexity} score capturing scene complexity that includes factors such as lane topology and agent-agent/map interactions. 
Using this score, we (i) rank traffic scenes 
as Easy/Moderate/Hard and (ii) construct realistic non-IID client splits with a Dirichlet partitioner (tunable concentration \(\alpha\)~\cite{Yurochkin2019BayesianNF} 
and mix parameter \(\nu\), covering the spectrum from highly imbalanced to nearly uniform deployments.

\item \textbf{Knowledge transfer across complexity levels.}
We demonstrate that training on more complex traffic scenes expands maneuver or topology coverage and transfers to moderate or easy scenes: Overall, increasing the fraction of Hard data (\(\nu\)) 
improves minADE and minFDE while reducing MR on a fixed validation distribution. This validates our central conjecture that the proposed complexity score modeling helps the system handle a wide range of trajectory dynamics rather than overfitting to a specific -- though common --  group of traffic scenes. 
\end{enumerate}

\section{Related Work}

\subsection{Trajectory Prediction for Autonomous Vehicles}
 
The ability of predicting future trajectories of surrounding agents is important for the safety of autonomous driving. 
The pipeline of trajectory prediction consists of three subtasks: input representation, context aggregation, and output representation. 

Input representation is typically generated using either rasterization \cite{bansal2018chauffeurnet, cui2019multimodal, djuric2020uncertainty} or vectorization \cite{liang2020learning, gao2020vectornet, zhou2022hivt}. While rasterized representations are well-suited for mature CNN backbones, they suffer from information loss and high computational costs. 
Recent research has increasingly focused on vectorized representations, where elements of HD maps and agents’ past trajectories are encoded as nodes within a graph structure. Compared to rasterized representations, vectorized representations mitigate information loss and enable more efficient encoding of complex topologies and long-term agent dynamics. 
To capture critical traffic interactions—such as vehicle-to-vehicle, vehicle-to-lane, and vehicle-to-pedestrian interactions—context aggregation modules employ techniques like social pooling \cite{deo2018convolutional, bansal2018chauffeurnet}, attention mechanisms \cite{ngiam2021scene, zhou2022hivt, gu2021densetnt, liu2021multimodal}, and Graph Neural Networks (GNNs) \cite{liang2020learning, djuric2020uncertainty, weng2022whose}. These methods extract and fuse contextual information from agent dynamics, road maps, and interactions among traffic participants, improving the accuracy of trajectory forecasting.
Finally, a trajectory prediction model often generates predictions of future trajectories using either regression-based \cite{deo2018convolutional, liang2020learning, zhou2022hivt} or proposal-based approaches \cite{chai2019multipath, phan2020covernet, zhao2021tnt, liu2021multimodal}. Regression-based methods directly predict multiple deterministic future trajectories from the extracted context features. In contrast, proposal-based methods first generate candidate trajectories or anchor points based on prior knowledge, making them inapplicable to our setting due to a lack of such prior knowledge.  

However, state-of-the-art models rely heavily on training with large-scale datasets, which may not be available in practice due to privacy concerns. 
In this work, we focus on distributed training two representative transformers - HiVT \cite{zhou2022hivt} and HPNet \cite{hpnet} through Federated Learning.  
Both models represent input through vectorization, capture complex and dynamic multi-agent interactions, and generate predictions without relying on prior knowledge \cite{zhou2022hivt,hpnet}.  
In addition, they are open-sourced and achieved the state-of-the-art at the time of their introduction.  HiVT is one of the earlier models that can efficiently encode the complex interactions of a large number of agents in challenging driving scenes.  
HPNet is a newer model; unlike conventional models, it enforces consistency constraints between past and present predictions, achieving the state-of-the-art at the time this paper was prepared.


\subsection{Federated Learning for Trajectory Prediction}
%

Despite the broad applicability of FL in various machine learning contexts, its adoption for trajectory prediction tasks—particularly in autonomous driving—is still relatively limited.
Trajectory prediction for autonomous vehicles (AVs) presents unique challenges, including real-time prediction requirements, complex spatial-temporal interactions, and heterogeneous data distributions caused by varying driving behaviors, environments, and sensor capabilities.

Recent studies have explored FL for trajectory prediction in related settings. Flow-FL~\cite{majcherczyk2021flowfl} applies FL to trajectory prediction for connected robot teams, demonstrating the feasibility of decentralized learning in robotic environments.
However, it does not address the complexity and variability of autonomous driving scenarios.
%
ATPFL~\cite{wang2022atpfl} combines automated machine learning (AutoML) with federated learning for human trajectory prediction.
While effective for pedestrian prediction, ATPFL does not consider complex vehicle interactions or adaptive strategies for heterogeneous data, which are critical in AV applications.

Several studies have also applied FL to autonomous driving perception tasks. For example, Liu et al.~\cite{liu2021federated} proposed a federated framework for V2X perception, while FedVCP~\cite{zhang2022federated} enables collaborative visual context perception among vehicles. Although these works demonstrate the potential of FL in connected vehicle systems, they focus on perception rather than trajectory forecasting and do not address adaptive client selection.

To the best of our knowledge, federated learning for real-world connected and autonomous vehicle (CAV) trajectory prediction remains largely unexplored. Existing studies do not comprehensively address data heterogeneity, environmental variability, and uncertainty in trajectory forecasting. Our work fills this gap by introducing an adaptive federated learning framework tailored to CAV trajectory prediction, incorporating adaptive client selection and uncertainty-aware training to improve robustness under heterogeneous data distributions.

\subsection{Active Client Selection in Federated Learning}



The idea behind active learning is to identify data samples that are more informative for model training. Several existing approaches aim to improve FL performance through biased client selection. Some methods focus on accelerating the convergence rate by selecting clients with faster computation capabilities or larger datasets \cite{nishio2019client, ribero2020communication}. Others prioritize higher average accuracy by choosing clients based on local model performance or gradient updates \cite{goetz2019active, cho2020client}. Furthermore, fairness-aware FL strategies, such as $q$-Fair FL, aim to balance participation among clients to mitigate biases caused by data heterogeneity \cite{li2020fair, huang2020efficiency}.

Inspired by active learning, several works propose active client selection strategies to enhance model training efficiency. Some research papers \cite{goetz2019active, cho2020client, xie2024adaptive} leverage local training loss as an active learning metric, selecting clients with higher loss values to facilitate model improvement. Similarly, Li et al. \cite{li2022uncertainty} introduce Bayesian active learning into FL, where clients are chosen based on model uncertainty. These methods demonstrate the potential of selective participation in FL, improving both convergence and model generalization. However, existing strategies predominantly focus on epistemic uncertainty or loss-based metrics, with limited exploration of data complexity itself as a client selection criterion.

Our work addresses this gap by integrating both data complexity (high rank clients) and training loss-based selection to improve FL performance. Unlike prior approaches that consider only a single metric for client selection, our method adaptively balances high-rank and high-loss client selection, optimizing both generalization and robustness in federated settings.
\section{Problem Formulation}
\label{sec: problem formulation}

In this section, we formally describe the federated trajectory prediction problem for connected autonomous vehicles. We introduce the data setup,
and explain the roles of each client and the central server within our federated learning framework. 
We also present, at a high level, two trajectory prediction models, HiVT \cite{zhou2022hivt} and HPNet \cite{hpnet}, that will be used in our methods; details are deferred to supplementary materials for completeness. 
\subsection{Setup}
A driving scene data (i.e.\,a data sample) 
can be described by a triple $S=(X, Y, \mathcal{M})$, where $X$ and $Y$ are the collections of observed and future trajectories of the involved agents (an agent can be a vehicle or a pedestrian), and $\mathcal{M}$ is the map.  
Let $m$ denote the number of agents in the scene, then $X$ and $Y$ can be expressed as $X = \{x_1,...,x_m$\} and $Y = \{y_1,...,y_m$\}, 
where $x_i \in \mathbb{R}^{2 \times T_{obs}}$ and $y_i \in \mathbb{R}^{2 \times T_{pre}}$ are the two-dimensional observed and future trajectories of agent $i$, with lengths $T_{obs}$ and $T_{pre}$, respectively. In particular, for any given scene $S$, we designate a target agent, denoted by $i^*$, to evaluate the performance of the trajectory prediction model. 

The system contains a server and $C$ autonomous vehicles, each of which can collect its driving scene data using Lidar sensors and cameras to record the trajectories of all neighboring vehicles.   
We refer to each autonomous vehicle as one ego vehicle, which serves as one client in our FL framework. 
Each client $c\in \mathcal{C} \triangleq \{1, \cdots, C\}$ has a local dataset of size $K_c$, denoted as $\mathcal{D}_c = \{S^1, S^2,...,S^{K_c}\}$. Let $m_c^k$ be the number of agents in the $k$-th sample of client $c$. Let $K = \sum\limits_{c=1}^{C}K_c$ denote the total number of samples. 

\subsection{Backbones at a Glance: HiVT and HPNet}

\textbf{HiVT} (Hierarchical Vector Transformer)~\cite{zhou2022hivt} is a lightweight transformer forecaster that separates local context extraction from global interaction modeling for multi-agent, multi-modal prediction. In FL, we train HiVT with an uncertainty-aware likelihood objective.

\textbf{HPNet} (Hierarchical Prediction Network)~\cite{hpnet} is a more recent backbone with a stronger inductive bias for long-horizon, structured motion. Beyond the generic “coarse-to-fine” notion, HPNet introduces: (i) a \emph{hierarchical decoder} that first produces a global, low-frequency future layout (stabilizing long-range intent) and then performs local refinements (capturing short-range maneuver details); (ii) \emph{multi-scale temporal fusion} that aggregates context across different temporal resolutions, improving recall of long-range cues while preserving near-term dynamics; and (iii) an \emph{interaction-aware} fusion stage that better conditions local refinements on surrounding agents’ evolution. In practice, these design choices lead to stronger modeling capacity on complex scenes (turns, merges, near-intersection behavior) and more stable gradients under heterogeneous client data. For this reason, we adopt HPNet as the backbone when evaluating our heterogeneity-aware selector ({\our}), so the selection logic is not bottlenecked by a weaker decoder.

\section{Methodology}
\label{sec: method}
%


In this methodology section, we first introduce a generic FL algorithm for collaborative trajectory prediction model training, paving the way for active client selection. 
Then, we develop a family of active client selection methods that incrementally build awareness of uncertainty and complexity heterogeneity. 


\subsection{Federated Learning Based Trajectory Prediction (FLTP)}
\label{subsec: FLTP} 
%
In FL, as the client can only get access to its local data, the local objective of client $c$ is defined as:
\begin{equation}
\label{eq: local objective}
F_c(\mathcal{D}_c,w)
=
\frac{1}{K_c}
\sum_{S\in\mathcal{D}_c}
L(S,w),
\end{equation}

where $L(\cdot, \cdot)$ is a loss function chosen according to the model of interest. 
In trajectory prediction, it typically combines a trajectory fit term with a probability model for multi-modal futures.  
For example, for HiVT, $L$ combines the regression loss based on the NLL of the Laplace distribution with cross entropy loss (for mode classification), whereas in HpNet, $L$ is primarily the mean squared error (MSE) loss.
 
We formally describe our FLTP in Algorithm \ref{alg:fltp_server}. 
It follows the general server-client interaction of FedAvg \cite{mcmahan2017communication}. Departing from the standard FedAvg, instead of stochastic gradient descent, we use AdamW as the local optimizer.  
%
%
We adopt AdamW because its adaptive preconditioning and decoupled weight decay improve stability under small local batches and non-IID drifts, reduce sensitivity to learning-rate tuning, and enable faster and more robust convergence across heterogeneous clients.
In addition, before proceeding to model update iterations, each client computes and reports the complexity scores of its local datasets, denoted by $\bar{\kappa}_c$, to the parameter server. 
The definition and computation of $\bar{\kappa}_c$ are deferred to Section \ref{subsec: complexity selector}. 
In each global iteration of Algorithm \ref{alg:fltp_server}:\footnote{It is worth noting that the algorithm can be improved by using a global stepsize. We leave this direction to future work. } 
\begin{itemize}
\item 
The parameter server first forms a \emph{candidate pool} \(\mathcal{C}_r \subset \mathcal{C}\) according to the given pooling criteria.\footnote{Popular examples of the pooling criteria include full participation, uniformly at random, and random selection with/without replacement.}
\item Then the parameter server sends the current model $w_{r-1}$ to each of the candidate clients in $\mathcal{C}_r$ 
to update models on their local data. 
\item In parallel, each of the pooled clients in $\mathcal{C}_r$ runs AdamW with respect to Eq.\eqref{eq: local objective} with the specified minibatch size $B$ for $E$ epochs on local dataset. Concretely, in the \textsf{ClientUpdate} function, $\theta$ and $\Sigma$ are the weighted cumulative first and second moments, respectively, of the mini-batch gradients observed so far with $\beta_1, \beta_2 \in (0, 1)$ as the momentum parameters. Depending on the minibatch size $B$, for the first few iterations in the inner {\bf for}-loop, the smallest eigenvalues of $\hat{\Sigma}$ could be either zero or extremely small -- resulting in significant fluctuation of $w$. Hence, $\epsilon>0$ is used to smooth the updates. 
\item The reported local updates $\{w_r^c\}_{c\in \mathcal{C}_r}$, $\mathcal{C}_r$, and the complexity scores $\{\bar{\kappa}_c\}_{c\in \mathcal{C}}$ are then passed to a client selection function to output a subset $\mathcal{L}_r\subseteq \mathcal{C}_r$, which we will further develop in Section \ref{subsec: active learning} and \ref{subsec: complexity selector}.   
\item Finally, the parameter server aggregates the updates $w_{r}^c$ reported by clients in $\mathcal{L}_r$ to obtain $w_{r}$. 
\end{itemize}


\begin{algorithm}[ht]
\caption{\small FLTP with ClientUpdate}
\label{alg:fltp_server}
\small 
\KwIn{Initial model $w_0$, number of clients $C$, local data volumes $\{K_1, \cdots, K_C\}$, stepsize $\eta$, batch size $B$, number of epochs $E$, weight decay $\lambda$, momentum parameters $\beta_1, \beta_2 \in (0,1)$, smoothing parameter $\epsilon$}
\KwOut{Final model $w_R$}

Initialize global model: $w \leftarrow w_0$\;
\For{each client $c\in \mathcal{C}$}
{Computes and reports $\bar{\kappa}_c$ -- the complexity score of its local dataset. }

\For{each round $r = 1$ to $R$}{
    Select candidate pool $\mathcal{C}_r$\;
    Send $w_{r-1}$ to all clients in $\mathcal{C}_r$\;
    \For{each client $c \in \mathcal{C}_r$ in parallel}{
        $w_r^c \leftarrow$ \texttt{ClientUpdate}$(w_{r-1}, \eta, B, E, \beta_1, \beta_2, \epsilon, \lambda)$\;
        Compute local selection value $v_c$; 
    }
    $\mathcal{L}_r \leftarrow \text{\texttt{SelectFunction}}(\mathcal{C}_r, \{v_c\}_{c\in \mathcal{C}_r}, \{\bar{\kappa}_c\}_{c\in \mathcal{C}}, k, \text{mode})$\;
    Aggregate updates: $w_r \leftarrow \sum_{c \in \mathcal{L}_r} \frac{K_c}{\tilde{K}_r} w_r^c$, where $\tilde{K}_r = \sum_{c \in \mathcal{L}_r} K_c$\;

}
\Return $w_R$\;

\textbf{Procedure} \texttt{ClientUpdate}($w, \eta, B, E, \beta_1, \beta_2, \epsilon, \lambda$):

Initialize: $\theta \leftarrow 0$, $\Sigma \leftarrow 0$, $t \leftarrow 0$\;
\For{each local epoch $t = 1$ to $E$}{
    Divide local dataset $\mathcal{D}_c$ into batches $\mathcal{B}_c$ of size $B$\;
    \For{each batch $b \in \mathcal{B}_c$}{
        $t \leftarrow t + 1$\;
        Compute gradient: $g = \nabla F_c(b, w)$\;
        Apply weight decay: $w \leftarrow w - \eta \lambda g$\;
        Update moments: $\theta \leftarrow \beta_1 \theta + (1 - \beta_1) g$, $\Sigma \leftarrow \beta_2 \Sigma + (1 - \beta_2) gg^\top$\;
        Bias correction: $\hat{\theta} \leftarrow \theta / (1 - \beta_1^t)$, $\hat{\Sigma} \leftarrow \Sigma / (1 - \beta_2^t)$\;
        Update model: $w \leftarrow w - \eta (\hat{\Sigma}^{1/2} + \epsilon I)^{-1} \hat{\theta}$\;
    }
}
\Return $w$\;
\end{algorithm}

\begin{algorithm}[!t]
\caption{\small \texttt{SelectFunction}: Client-selection subroutine}
\label{alg:select_function}
\small 
\KwIn{Candidate pool $\mathcal{C}_r$, $k$, $w_{r}$, mode $\in\{\textsf{uniform},\textsf{NLL},\textsf{AU}, \textsf{DR}\}$,  $\{v_c\}_{c\in\mathcal{C}_r}$}
\KwOut{Selected set $\mathcal{L}_r$ or $\mathcal{L}_r^{(\kappa)}$}

\Switch{\text{mode}}{%
  \Case{\textsf{uniform}}{%
    Uniformly sample $k$ clients \emph{without replacement} from $\mathcal{C}_r$ to form $\mathcal{L}_r$\;
  }%
  \Case{\textsf{NLL}}{%
    Sort $\{v_c\}_{c\in\mathcal{C}_r}$ in descending order; take the top-$k$ to form $\mathcal{L}_r$\;
  }%
  \Case{\textsf{AU}}{%
    Compute $m \gets \operatorname{median}\big(\{v_c\}_{c\in\mathcal{C}_r}\big)$; select the $k$ clients with smallest $|v_c - m|$ to form $\mathcal{L}_r$\;
  }%
  \Case{\textsf{DR}}{%
    Sort $\{\bar{\kappa}_c\}_{c\in\mathcal{C}_r}$ in descending order; take the top-$k$ to form $\mathcal{L}_r^{(\kappa)}$\;
  }%
}
\Return $\mathcal{L}_r$\;
\end{algorithm}

\subsection{Uncertainty-aware Client Selectors} 
\label{subsec: active learning}
Departing from existing literature \cite{cho2020client}, instead of directly using the whole loss function as the metric, we consider two types of uncertainty-aware client selection metrics: NLL and AU. 

In our Algorithm \ref{alg:alfltp}, each client has a value variable $v_c$. In each round $r$ (where $r\ge2$), the parameter server samples the clients twice.  First, it randomly chooses $\lfloor f_2C \rfloor$ clients as a pool of candidates $\mathcal{C}_r$.  
Each of the chosen clients in $\mathcal{C}_r$ updates its local selection value $v_c$ as 
$$
v_c =  G_c(\mathcal{D}_c,w) \triangleq  \frac{1}{K_c}\sum\limits_{S\in\mathcal{D}_c}G(S,w),
$$
where 

\begin{equation}
\resizebox{\columnwidth}{!}{$
G(S,w)=
\begin{cases}
\frac{1}{T_{\mathrm{pre}}}\sum_{t=1}^{T_{\mathrm{pre}}}
\left[\log\!\Bigl(2\hat{b}_{i^\star,t,f_{\text{best}_{i^\star}}}\Bigr)+
\frac{\lVert y_{t}-\hat{\mu}_{i^\star,t,f_{\text{best}_{i^\star}}}\rVert_1}
{\hat{b}_{i^\star,t,f_{\text{best}_{i^\star}}}}\right], & \text{NLL},\\[4pt]
\frac{1}{T_{\mathrm{pre}}}\sum_{t=1}^{T_{\mathrm{pre}}}
\hat{b}_{i^\star,t,f_{\text{best}_{i^\star}}}, & \text{AU},
\end{cases}
$}
\label{eq:G}
\end{equation}
where $\hat{\mu}_{i^*,t,f_{best_{i^*}}}$ denotes the prediction location of the target agent at time frame $t$ in scene $S$ of the best mode of $F$ trajectories and $\hat{b}_{i^*,t,f_{best_{i^*}}}$ denotes the corresponding aleatoric uncertainty.
All clients that are not contained in $\mathcal{C}_r$ reset their selection values as $v_c=0$.  
When NLL is used as the selection metric, then the parameter server chooses the set $\mathcal{L}_r$ to be the $\lfloor f_1C \rfloor$ clients with the highest $v_c$.
When AU is used as the metric, then the parameter server chooses $\mathcal{L}_r$ to contain the $\lfloor f_1C \rfloor$ whose values $v_c$ are closest to their median.  

When $r=1$, 
as there is no global model from the last round for value calculation, the client selection method is similar to that in FLTP.

\begin{algorithm}[t]
\caption{\small FLTP-NLL/AU}
\label{alg:alfltp}
\small 
\KwIn{Initial model $w_0$, stepsize $\eta$, epochs $E$, client sampling rate $f_1\!\in\!(0,1]$, candidate sampling rate $f_2\!\in\!(0,1]$, $\{K_1,\dots,K_C\}$, weight decay $\lambda$, momentum $\beta_1,\beta_2\!\in\!(0,1)$, smoothing $\epsilon$, selection mode $u\in\{\textsf{NLL},\textsf{AU}\}$}

\KwOut{Final model $w_R$}

Initialize $w \leftarrow w_0$\;
Uniformly-at-random sample $\lfloor f_2 C\rfloor$ clients \emph{without replacement} to form $\mathcal{C}_1$\; 
Broadcast $w_{0}$ to all $c\in\mathcal{C}_1$\;
\ForEach{$c \in \mathcal{C}_r$ in parallel}{
    $w_1^c \leftarrow \texttt{ClientUpdate}(w_0, \eta, B, E, \beta_1, \beta_2, \epsilon, \lambda)$\;
    $v_c \gets 0$\; 
}
$\mathcal{L}_1 \leftarrow \texttt{SelectFunction}(\mathcal{C}_r, \{v_c\}_{c\in \mathcal{C}_r}, \lfloor f_2 C\rfloor, \textsf{uniform})$\;
Aggregate: $w_1 \leftarrow \sum_{c \in \mathcal{L}_1} \frac{K_c}{\tilde{K}_1}\, w_1^c$, where $\tilde{K}_1 = \sum_{c \in \mathcal{L}_1} K_c$\;

\For{$r = 2$ to $R$}{
    Uniformly-at-random sample $\lfloor f_2 C\rfloor$ clients without replacement to form candidate pool $\mathcal{C}_r$\;  
    Broadcast $w_{r-1}$ to all $c\in\mathcal{C}_r$\;
    \ForEach{$c \in \mathcal{C}_r$ in parallel}{
        $v_c \leftarrow G_c(\mathcal{D}_c, w_{r-1})$\; 
    }
    $\mathcal{L}_r \leftarrow \texttt{SelectFunction}(\mathcal{C}_r, \lfloor f_1 C \rfloor, \{v_c\}_{c\in\mathcal{C}_r}, u)$\;
    \ForEach{$c \in \mathcal{L}_r$ in parallel}{
        $w_r^c \leftarrow \texttt{ClientUpdate}(w_{r-1}, \eta, B, E, \beta_1, \beta_2, \epsilon, \lambda)$\;
    }
    $w_r \leftarrow \sum_{c \in \mathcal{L}_r} \frac{K_c}{\tilde{K}_r}\, w_r^c$, where $\tilde{K}_r = \sum_{c \in \mathcal{L}_r} K_c$\;
}
\Return $w_R$\;
\end{algorithm}

\paragraph{NLL as a selection metric}
We choose NLL as one selection metric for the following two reasons: 
Since NLL is incorporated as part of the loss function, a client has a higher NLL if the global model is not sufficiently trained with respect to its local data.
As the local data is non-iid covering different driving scenarios, by selecting clients with higher NLL, the global model in FL is encouraged to do more local training on clients with more difficult data. 

\paragraph{AU as a selection metric} 
This metric is inspired by \cite{xi2021robust}, where incremental active learning is adopted for human trajectory prediction to evaluate candidate data samples and then select more valuable samples. Specifically, both noisy and redundant trajectory candidate data samples are removed and the model trained on filtered data samples achieves better performance. In our FLTP, we exploit aleatoric uncertainty to measure the degree of data noise. High aleatoric uncertainty means data are very noisy, while low aleatoric uncertainty means data are easy and the model is certain about them. As a result, we prefer clients with median aleatoric uncertainty, as data on these clients are both representative and less noisy. 

\vskip 0.6\baselineskip
\noindent {\em Relaxing full client participation in updating $v$.} 
For ease of exposition, in Algorithm \ref{alg:alfltp}, 
we let every client participate in updating $v_c$. In practice, it suffices to have the clients in $\mathcal{C}_r$ do the value updates only. Since the value update does not rely on any previous value of $v_c$,
the updated values are only used in the sorting at the PS, and the PS knows the $\mathcal{C}_r$, the PS can treat $v_c=0$ for all $c\notin \mathcal{C}_r$.

\subsection{Uncertainty-Awareness and Heterogeneous-Complexity Client Selector}
\label{subsec: complexity selector}
As we have introduced previously, FLTP-NLL or FLTP-AU solely uses NLL or AU as a standalone client selection criterion. Although these single-factor criteria significantly improve performance by leveraging uncertainty measures, real-world traffic scenes often also exhibit a wide range of heterogeneity in scene complexity. 
Thus, to further enhance model robustness and generalization, we introduce {\our}, an advanced adaptive client selection mechanism integrating multiple factors:  complexity ranking and training loss.

\subsubsection{ Complexity Quantification}
\label{sec: data partition}

We filter the dataset by city (e.g., Pittsburgh or Miami) to preserve geographical consistency within each client’s dataset. Each 
driving scene, denoted as \( S_i \), is then processed to extract a set of interpretable features that reflect the complexity of its driving scenario. These features include the number of left turns, the average vehicle speed, the average distance to nearby intersections, and the density of surrounding agents, inferred from the number of unique \texttt{TRACK\_ID} objects in the scene.

The number of left turns in a trajectory is associated with increased decision-making complexity. Average speed reflects the dynamics of the driving scene—trajectories with abrupt or high-speed segments often indicate more challenging prediction tasks. The intersection distance measures how close a trajectory is to traffic junctions, where smaller distances correspond to more interaction-heavy or congested regions. Population density provides a proxy for environmental complexity, capturing how many other agents are present near the ego vehicle.

For each \(S_i\), we extract four interpretable features that correlate with scene complexity:
\noindent
$t_i$: \# left turns;\;
$v_i$: avg.\ speed;\;
$d_i$: mean dist.\ to nearest intersection;\;
$\lambda_i$: agent density (unique \texttt{TRACK\_ID}s).
Let \(s\in\mathbb{N}\) be a normalization scale (typically \(s{=}9\)). For any feature family \(f_k\in\{t,v,d,\lambda\}\), define its min–max normalization across the corpus
\[
\tilde f_k(S_i)\;=\;s\,
\frac{f_k(S_i)-\min_j f_k(S_j)}{\max_j f_k(S_j)-\min_j f_k(S_j)}.
\]
We then form a \emph{complexity index} as a continuous score
\begin{equation}
\label{eq:kappa}
\kappa(S_i)\;=\;\tfrac{1}{4}\Big(\tilde f_{t}(S_i)+\tilde f_{v}(S_i)+\big(s-\tilde f_{d}(S_i)\big)+\tilde f_{\lambda}(S_i)\Big),
\end{equation}
where the intersection term is inverted so that smaller intersection distances (larger interaction likelihood) raise complexity.%

To obtain discrete levels, fix the number of bins \(\ell\in\mathbb{N}\) (e.g., \(\ell=s{+}1\)) and compute empirical percentiles \(\{q_0,\dots,q_\ell\}\) over \(\{\kappa(S_i)\}\) with \(q_0=\min\kappa\), \(q_\ell=\max\kappa\). 
The \emph{rank}
\begin{equation}
\label{eq:rho}
\begin{aligned}
\rho(S_i)\;&=\;\min\{j\in\{1,\dots,\ell\}:\kappa(S_i)\le q_j\}-1 \\
& \in\{0,\dots,\ell-1\}    
\end{aligned} 
\end{equation}
coarsely stratifies trajectories from easiest (\(0\)) to most complex (\(\ell{-}1\)). Combined with Dirichlet partitioning (concentration \(\alpha\)), this yields realistic non-IID client datasets with interpretable per-sample labels and supports our later use of complexity-
aware client selection in federated learning.

\subsubsection{Selection Criteria}

Formally, each client $c \in \mathcal{C}$ computes two metrics. First, the trajectory  Complexity rank is calculated by averaging the  complexity scores of local trajectories:
\begin{equation}
\bar{\kappa}_c\;\triangleq\;\frac{1}{K_c}\sum_{k=1}^{K_c}\kappa\!\big(S_c^k\big),
\end{equation} 
where $K_c$ represents the number of trajectories at client $c$, and $\kappa(S^k)$ denotes the complexity metric derived from either handcrafted trajectory features. 

Second, the training loss for each client is computed as:
\begin{equation}
L_c(w)\;\triangleq\;\frac{1}{K_c}\sum_{k=1}^{K_c} L\!\big(S_c^k,\,w\big),
\end{equation}
recalling that $L(S^k, w)$ is the loss evaluated on the scene $S^k$ using the current global model parameters $w$.

\noindent\textbf{Deterministic hybrid.}
Let $|\mathcal{L}_r|$ denote the total number of clients selected in round $r$, and let $\nu \in [0,\,|\mathcal{L}_r|]$ be a configurable parameter specifying how many clients to select based on scene complexity ranking. Specifically, {\our} selects the top $\nu$ clients with the most complex local datasets (as quantified by trajectory rank scores) and an additional $(|\mathcal{L}_r|-\nu)$ clients exhibiting the highest training loss. When $\nu=0$, the selection prioritizes model underperformance exclusively (pure loss-driven); when $\nu=|\mathcal{L}_r|$, it selects clients solely by data complexity. Intermediate values of $\nu$ balance these two priorities.
In round $r$, FLTP\text{-}DR selects clients in two stages to balance challenging data and underperforming regions:
\[
\mathcal{L}_r^{(\kappa)}=\text{Top-}\nu\ \text{by }\bar{\kappa}_c,\qquad
\mathcal{L}_r^{(\text{loss})}=\text{Top-}\big(|\mathcal{L}_r|-\nu\big)\ ,
\]
\[
\mathcal{L}_r=\mathcal{L}_r^{(\kappa)}\cup\mathcal{L}_r^{(\text{loss})}.
\]

All selections are without replacement within round $r$.

\noindent\textbf{Probabilistic variant.}
To promote broader participation and reduce selection bias, we replace hard top-$k$ with sampling over the candidate pool $\mathcal{C}_r = \mathcal{C}$.
Z\mbox{-}score each statistic within $\mathcal{C}$ to remove scale effects,
\[
\begin{aligned}
\tilde{\kappa}_c = \frac{\bar{\kappa}_c-\mu_\kappa}{\sigma_\kappa}, ~~~ \text{and} ~~~ 
\tilde{L}_c      = \frac{L_c(w_{r-1})-\mu_L}{\sigma_L}, 
\end{aligned}
\] 
where $(\mu_\kappa,\sigma_\kappa)$ and $(\mu_L,\sigma_L)$ are the mean and standard deviation over $c\in\mathcal{C}$, $\bar{\kappa}_c$ is the client’s average complexity, and $L_c(w_{r-1})$ is the client’s average loss.

Then we map scores to nonnegative weights and normalize to probabilities:
\[
\begin{aligned}
p_c^{(\kappa)} &= \frac{\max\{\tilde{\kappa}_c,0\}}
{\sum_{c'\in\mathcal{C}_r}\max\{\tilde{\kappa}_{c'},0\}},\\
p_c^{(\text{loss})} &= \frac{\max\{\tilde{L}_c,0\}}
{\sum_{c'\in\mathcal{C}_r}\max\{\tilde{L}_{c'},0\}}.
\end{aligned}
\]
Then we sample \emph{without replacement} $\nu$ clients from $\mathcal{C}$ according to $p^{(\kappa)}$ and the remaining $(|\mathcal{L}_r|]-\nu)$ according to $p^{(\text{loss})}$, removing any client drawn in the first stage before the second and renormalizing. 
This introduces diversity while preserving emphasis on high\mbox{-}complexity and high\mbox{-}loss clients; if a denominator is zero, fall back to uniform sampling over the remaining pool.

Each selected client receives the current global model weights $w_{r-1}$ and performs local updates using the AdamW optimizer on its dataset for $E$ epochs. After local training, updated client models are transmitted back to the server for aggregation. The server then aggregates these client updates in a weighted manner based on client dataset sizes 

\begin{equation}
    w_{r} = \sum_{c \in \mathcal{L}_r}
    \frac{K_c}{\tilde{K}_r} w_r^{c},
\end{equation}

The iterative training process continues for a predefined number of global rounds or until convergence is reached. By explicitly incorporating both scene complexity and training loss into client selection and aggregation strategies, {\our} effectively addresses the challenges posed by non-IID federated data, enhancing robustness, privacy, and predictive accuracy.
The complete workflow of {\our} is summarized in Algorithms~\ref{alg:fl_training}, detailing the complexity-aware scoring mechanism and federated training steps, respectively.

\begin{algorithm}[ht]
\caption{\small FL with complexity–Ranked Adaptive Selection ({\our})}
\label{alg:fl_training}
\small 
\KwIn{Initial model $w_0$, Dataset $\mathcal{D}$, number of clients $C$,  stepsize $\eta$, batch size $B$, number of epochs $E$, weight decay $\lambda$, momentum parameters $\beta_1, \beta_2 \in (0,1)$, smoothing parameter $\epsilon$,  target size $|\mathcal{L}_r|$, selection balance parameter $\nu$}
\KwOut{Final model $w_R$}

Initialize global model: $w \leftarrow w_0$\;

\For{each client $c\in \mathcal{C}$}
{Computes and reports $\bar{\kappa}_c$ -- the complexity score of its local dataset. }

\For{$r=1$ \KwTo $R$}{
  Send $w_{r-1}$ to all clients $\mathcal{C}$ \;

  \ForEach{$c\in\mathcal{C}$ in parallel}{
    $\displaystyle L_c(w_{r-1})\leftarrow \frac{1}{K_c}\sum_{S_c^k\in\mathcal{D}_c} L(S_c^k,\,w_{r-1})$\;
  }

  $\mathcal{L}_r^{(\kappa)} \leftarrow \texttt{SelectFunction}(\mathcal{C}, \{\bar{\kappa}_c\}, \nu, \textsf{DR})$\;

  $\mathcal{L}_r^{(\text{loss})} \leftarrow \texttt{SelectFunction}(\mathcal{C}\setminus\mathcal{L}_r^{(\kappa)}, \{ L_c(w_{r-1})\}_{c\in\mathcal{C}\setminus \mathcal{L}_r^{(\kappa)}}, |\mathcal{L}_r|, \textsf{NLL})$\;
  
  $\mathcal{L}_r \leftarrow \mathcal{L}_r^{(\kappa)} \cup \mathcal{L}_r^{(\text{loss})}$\;

  \ForEach{$c\in\mathcal{L}_r$ in parallel}{
    $w_r^c \leftarrow \texttt{ClientUpdate}(w_{r-1}, \eta, B, E, \beta_1, \beta_2, \epsilon, \lambda)$\;
  }

  $w_r \leftarrow \sum_{c \in \mathcal{L}_r} \frac{K_c}{\tilde{K}_r}\, w_r^c$, where $\tilde{K}_r = \sum_{c \in \mathcal{L}_r} K_c$\;
}
\Return $w_R$\;

\end{algorithm}

\subsection{Theoretical Analysis}
\label{subsec: theorem}
We provide theoretical support for our client selection from two
perspectives. First, we give a formal convergence statement under a
decaying gradient-bias assumption. Second, we use an importance-sampling
interpretation to explain why non-uniform DR-based sampling can lead to superior performance.  

\subsubsection{Convergence Analysis}
Our analysis follows the standard stochastic first-order optimization
framework for nonconvex objectives, where convergence is characterized
through the gradient norm 
$\mathbb{E}\|\nabla F(w_r)\|^2$. 
Let
\begin{equation}
F(w)=\frac{1}{C}\sum_{c=1}^C F_c(w)
\end{equation}
be the global objective, where $F_c$ is the local trajectory prediction loss
of client $c$. 
At communication round $r$, the server update is
\begin{equation}
w_{r+1}=w_r-\eta g_r,
\end{equation}
where $g_r$ is the aggregated update determined by client selection.

\begin{assumption}[Smoothness]
\label{ass:smooth}

The global objective $F:\mathbb{R}^d\rightarrow\mathbb{R}$
is $L$-smooth, i.e., for any $u,v\in\mathbb{R}^d$,
\begin{equation}
F(v)
\leq
F(u)
+
\langle \nabla F(u),v-u\rangle
+
\frac{L}{2}\|v-u\|^2 .
\end{equation}

\end{assumption}

\begin{assumption}[Gradient bias and noise]
\label{ass:bias_noise}
For each $r$, define
\begin{equation}
\delta_r
:=
\left\|
\mathbb{E}[g_r\mid w_r]-\nabla F(w_r)
\right\|.
\end{equation}
We assume the DR-induced bias decays as
\begin{equation}
\delta_r \leq D/r^\epsilon, 
\qquad D>0,\quad \epsilon>0.
\end{equation}
We also assume the stochastic error satisfies
\begin{equation}
\mathbb{E}
\left[
\|g_r-\nabla F(w_r)\|^2
\mid w_r
\right]
\leq
\sigma^2+\delta_r^2 .
\end{equation}
\end{assumption}

Figure~\ref{fig:gradient_error} empirically illustrates the decreasing trend of 
$\|e_r\|=\|g_r-\nabla F(w_r)\|$, which is consistent with the decaying-bias 
condition used in Assumption~\ref{ass:bias_noise}.

\begin{figure}[t]
    \centering
    \includegraphics[width=0.8\columnwidth]{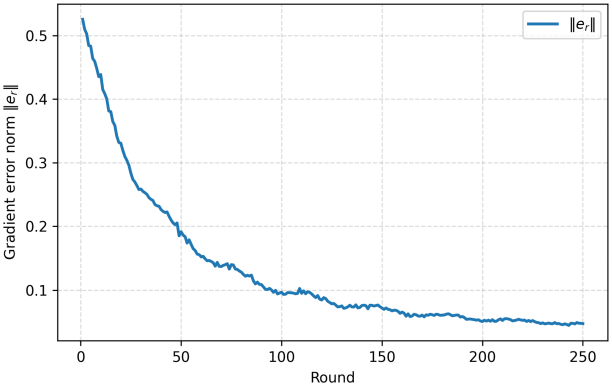}
    \caption{Evolution of the gradient error norm 
    $\|e_r\|=\|g_r-\nabla F(w_r)\|$ over communication rounds.The decreasing trend indicates that the discrepancy between the 
    selected-client update and the full gradient gradually diminishes 
    during training.}
    \label{fig:gradient_error}
\end{figure}

\begin{lemma}[One-step descent under biased updates]
\label{lem:biased_descent}
Suppose Assumptions~\ref{ass:smooth}--\ref{ass:bias_noise} hold.
If $\eta\leq 1/(4L)$, then
\begin{align*}
\mathbb{E}[F(w_{r+1})\mid w_r]
&\leq
F(w_r)
-
\frac{\eta}{4}
\|\nabla F(w_r)\|^2 \\
&\quad+
\left(
\frac{\eta}{2}+L\eta^2
\right)\delta_r^2
+
L\eta^2\sigma^2 .
\end{align*} 
\end{lemma}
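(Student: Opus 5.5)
The plan is to apply the descent inequality of Assumption~\ref{ass:smooth} to the single step $w_{r+1}=w_r-\eta g_r$ and then take conditional expectations given $w_r$. Substituting $u=w_r$ and $v=w_{r+1}$ gives
\begin{equation*}
\mathbb{E}[F(w_{r+1})\mid w_r]\le F(w_r)-\eta\langle \nabla F(w_r),\mathbb{E}[g_r\mid w_r]\rangle+\frac{L\eta^2}{2}\mathbb{E}[\|g_r\|^2\mid w_r].
\end{equation*}
I will write $b_r:=\mathbb{E}[g_r\mid w_r]-\nabla F(w_r)$, so that $\|b_r\|=\delta_r$ by Assumption~\ref{ass:bias_noise}. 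The rest of the argument bounds the two terms on the right separately.

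\textbf{Inner-product term.} Using the decomposition of $\mathbb{E}[g_r\mid w_r]$,
\begin{equation*}
-\eta\langle \nabla F(w_r),\nabla F(w_r)+b_r\rangle=-\eta\|\nabla F(w_r)\|^2-\eta\langle\nabla F(w_r),b_r\rangle .
\end{equation*}
Young's inequality, $|\langle a,b\rangle|\le \tfrac12\|a\|^2+\tfrac12\|b\|^2$, bounds the cross term, and the whole term is at most $-\tfrac{\eta}{2}\|\nabla F(w_r)\|^2+\tfrac{\eta}{2}\delta_r^2$. This produces the $\tfrac{\eta}{2}\delta_r^2$ contribution in the statement.

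\textbf{Second-moment term.} I write $g_r=\nabla F(w_r)+(g_r-\nabla F(w_r))$ and use $\|a+b\|^2\le 2\|a\|^2+2\|b\|^2$. Together with the stochastic-error bound of Assumption~\ref{ass:bias_noise}, this gives
\begin{equation*}
\mathbb{E}[\|g_r\|^2\mid w_r]\le 2\|\nabla F(w_r)\|^2+2(\sigma^2+\delta_r^2).
\end{equation*}
Multiplying by $L\eta^2/2$ yields $L\eta^2\|\nabla F(w_r)\|^2+L\eta^2\sigma^2+L\eta^2\delta_r^2$. These supply the remaining $L\eta^2\delta_r^2$ and $L\eta^2\sigma^2$ terms.

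\textbf{Combining.} Adding the two bounds, the coefficient of $\|\nabla F(w_r)\|^2$ is $-\tfrac{\eta}{2}+L\eta^2$. The stepsize condition $\eta\le 1/(4L)$ gives $L\eta^2\le \eta/4$, so this coefficient is at most $-\eta/4$, which is exactly the claimed inequality.

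\textbf{Main point of care.} The only step needing attention is the bookkeeping of constants. The Young split must use weight $1/2$, so that the cross term yields exactly $\tfrac{\eta}{2}\delta_r^2$. I use the crude factor-$2$ bound on $\|g_r\|^2$ rather than an exact bias–variance decomposition. That bound is looser, but it matches the stated constants and needs nothing beyond Assumption~\ref{ass:bias_noise}. No independence or unbiasedness of the selection is required, because the bias is handled entirely through $\delta_r$.
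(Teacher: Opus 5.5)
Your proposal is correct and follows the paper's proof essentially step for step: $L$-smoothness applied to $w_{r+1}=w_r-\eta g_r$, a Young split with weight $1/2$ on the bias cross term, the factor-$2$ bound $\mathbb{E}[\|g_r\|^2\mid w_r]\le 2\|\nabla F(w_r)\|^2+2(\sigma^2+\delta_r^2)$, and $L\eta^2\le\eta/4$ from $\eta\le 1/(4L)$. Your version is slightly cleaner: you take the conditional expectation before applying Young's inequality, whereas the paper states $\langle\nabla F(w_r),g_r\rangle\ge\frac{1}{2}\|\nabla F(w_r)\|^2-\frac{1}{2}\delta_r^2$ as if it held pointwise, when it only holds after conditioning on $w_r$.
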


\begin{proof}
By $L$-smoothness in Assumption \ref{ass:smooth},
\begin{align*}
F(w_{r+1})
&\leq
F(w_r)
-\eta\langle \nabla F(w_r),g_r\rangle
+\frac{L\eta^2}{2}\|g_r\|^2 .
\end{align*}
Let $e_r=g_r-\nabla F(w_r).$
Then
\begin{align*}
\langle \nabla F(w_r),g_r\rangle
&=
\|\nabla F(w_r)\|^2
+
\langle \nabla F(w_r),e_r\rangle .
\end{align*}
By the Cauchy--Schwarz inequality and Assumption~\ref{ass:bias_noise}, we obtain  
\begin{align*}
&|\langle \nabla F(w_r),\mathbb{E}
[e_r \mid w_r] \rangle | 
\le \left|
\langle \nabla F(w_r), \mathbb{E}
[e_r 
\mid w_r
]
\rangle 
\right| \\
&\qquad \qquad \qquad \leq
\|\nabla F(w_r)\|\delta_r 
\leq
\frac{1}{2}\|\nabla F(w_r)\|^2
+
\frac{1}{2}\delta_r^2,
\end{align*}
where the last step follows from Young's inequality. 
Thus, 
\begin{align*}
\langle \nabla F(w_r),g_r\rangle 
&\ge \|\nabla F(w_r)\|^2 - \frac{1}{2}\|\nabla F(w_r)\|^2 - 
\frac{1}{2}\delta_r^2 \\
&= \frac{1}{2}\|\nabla F(w_r)\|^2 - 
\frac{1}{2}\delta_r^2. 
\end{align*}
Therefore,
\begin{align*}
\mathbb{E}[F(w_{r+1})\mid w_r]
&\leq
F(w_r)
-
\frac{\eta}{2}
\|\nabla F(w_r)\|^2 \\
&\quad+
\frac{\eta}{2}\delta_r^2
+
\frac{L\eta^2}{2}
\mathbb{E}[\|g_r\|^2\mid w_r].
\end{align*}
In addition, $\|g_r\|^2 \le 2\|\nabla F(w_r)\|^2
+
2\|g_r-\nabla F(w_r)\|^2$.  
Taking conditional expectation and applying Assumption~\ref{ass:bias_noise},
\begin{align*}
\mathbb{E}[\|g_r\|^2\mid w_r]
&\leq
2\|\nabla F(w_r)\|^2
+
2(\sigma^2+\delta_r^2).
\end{align*}
Substituting this bound gives
\begin{align*}
\mathbb{E}[F(w_{r+1})\mid w_r]
&\leq
F(w_r)
-
(
\frac{\eta}{2}-L\eta^2
)
\|\nabla F(w_r)\|^2 \\
&\quad+
(
\frac{\eta}{2}+L\eta^2
)\delta_r^2
+
L\eta^2\sigma^2 .
\end{align*}
Since $\eta\leq 1/(4L)$, we have $\frac{\eta}{2}-L\eta^2\geq \frac{\eta}{4}.$
\end{proof}

\begin{theorem}[Convergence under decaying bias]
\label{thm:dr_convergence}
Suppose Assumptions~\ref{ass:smooth}--\ref{ass:bias_noise} hold and
$F(w)\geq F^\star$. If $\eta\leq 1/(4L)$, then after $R$ communication rounds,
\begin{align*}
\frac{1}{R}
\sum_{r=1}^R
\mathbb{E}\|\nabla F(w_r)\|^2
&\leq
\frac{4(F(w_1)-F^\star)}{\eta R} \\
&\quad+
(2+4L\eta)
\frac{1}{R}
\sum_{r=1}^R
\delta_r^2
+
4L\eta\sigma^2 .
\end{align*}
\end{theorem}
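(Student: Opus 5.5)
The plan is the standard telescoping argument applied to Lemma~\ref{lem:biased_descent}. First, take total expectation of the one-step bound in Lemma~\ref{lem:biased_descent}; the tower property removes the conditioning on $w_r$. Since $\delta_r$ is bounded by the deterministic quantity $D/r^\epsilon$ in Assumption~\ref{ass:bias_noise}, we may treat $\delta_r^2$ as deterministic, or bound it by its expectation if needed. This gives, for every $r$,
\begin{equation*}
\frac{\eta}{4}\,\mathbb{E}\|\nabla F(w_r)\|^2
\le
\mathbb{E}[F(w_r)]-\mathbb{E}[F(w_{r+1})]
+\Big(\frac{\eta}{2}+L\eta^2\Big)\delta_r^2
+L\eta^2\sigma^2 .
\end{equation*}

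Next, sum this inequality over $r=1,\dots,R$. The function-value differences telescope to $F(w_1)-\mathbb{E}[F(w_{R+1})]$. Using the lower bound $F(w)\ge F^\star$, this is at most $F(w_1)-F^\star$, where $w_1$ is treated as given or deterministic. The remaining terms sum to
\begin{equation*}
\Big(\frac{\eta}{2}+L\eta^2\Big)\sum_{r=1}^R\delta_r^2 + R\,L\eta^2\sigma^2 .
\end{equation*}

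Finally, multiply both sides by $4/(\eta R)$. The leading term becomes $4(F(w_1)-F^\star)/(\eta R)$. The bias coefficient becomes $\frac{4}{\eta}\big(\frac{\eta}{2}+L\eta^2\big)=2+4L\eta$, and the noise term becomes $4L\eta\sigma^2$. This matches the claimed bound exactly. The step-size condition $\eta\le 1/(4L)$ is used only to invoke Lemma~\ref{lem:biased_descent}.

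There is no real obstacle; the only care needed is bookkeeping.
\begin{itemize}
\item The expectation must be taken correctly so that the conditional bound becomes an unconditional one.
\item The telescoping runs from index $1$ to $R+1$, so only $F(w_1)$ and the lower bound $F^\star$ remain.
\end{itemize}
As an optional remark, the decay $\delta_r\le D/r^\epsilon$ shows that $\frac1R\sum_r\delta_r^2\to 0$: it is $O(R^{-2\epsilon})$ for $\epsilon<1/2$, $O(\log R/R)$ for $\epsilon=1/2$, and $O(1/R)$ for $\epsilon>1/2$. The stated bound, however, keeps the sum explicit, so this remark is not needed for the proof.
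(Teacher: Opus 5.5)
Your proposal is correct and is essentially the paper's proof. You take total expectation in Lemma~\ref{lem:biased_descent}, telescope over $r=1,\dots,R$, use $\mathbb{E}[F(w_{R+1})]\ge F^\star$, and divide by $\eta R/4$ to get the coefficients $2+4L\eta$ and $4L\eta\sigma^2$. Your caveat about $\delta_r$ is right to raise: $\delta_r$ depends on $w_r$, so total expectation strictly gives $\mathbb{E}[\delta_r^2]$ (controlled by the deterministic bound $D^2/r^{2\epsilon}$), and the paper leaves this implicit too.
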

%

\begin{proof}
Taking total expectation in Lemma~\ref{lem:biased_descent} and summing
from $r=1$ to $R$ yields
\begin{align*}
\frac{\eta}{4}
\sum_{r=1}^R
\mathbb{E}\|\nabla F(w_r)\|^2
&\leq
F(w_1)-\mathbb{E}[F(w_{R+1})] \\
&\quad+
\left(
\frac{\eta}{2}+L\eta^2
\right)
\sum_{r=1}^R\delta_r^2
+
L\eta^2R\sigma^2 .
\end{align*}
Using $F(w_{R+1})\geq F^\star$ and dividing by $\eta R/4$ gives
\begin{align*}
\frac{1}{R}
\sum_{r=1}^R
\mathbb{E}\|\nabla F(w_r)\|^2
&\leq
\frac{4(F(w_1)-F^\star)}{\eta R} +
(2+4L\eta)
\frac{1}{R}
\sum_{r=1}^R\delta_r^2 \\
&\quad
+4L\eta\sigma^2 .
\end{align*}
Finally, using $\delta_r\leq D/r^\epsilon$,
$\frac{1}{R}
\sum_{r=1}^R \delta_r^2
\leq
\frac{D^2}{R}
\sum_{r=1}^R r^{-2\epsilon}.$  
\end{proof}

Moreover, if $\delta_r\leq D/r^\epsilon$, then
\begin{equation*}
\begin{aligned}
\frac{1}{R}
\sum_{r=1}^R
\mathbb{E}\|\nabla F(w_r)\|^2
\leq\;
&\mathcal{O}\!\left(\frac{1}{\eta R}\right)
+
\mathcal{O}(\eta\sigma^2) \\
&+
\mathcal{O}\!\left(
\frac{1}{R}
\sum_{r=1}^R r^{-2\epsilon}
\right), 
\end{aligned}
\end{equation*}
where $\mathcal{O}$ only hides absolute constants.

The bias accumulation term in Theorem \ref{thm:dr_convergence} satisfies
\begin{equation}
\frac{1}{R}
\sum_{r=1}^R r^{-2\epsilon}
=
\begin{cases}
\mathcal{O}(R^{-2\epsilon}), & 0<2\epsilon<1,\\
\mathcal{O}(\log R/R), & 2\epsilon=1,\\
\mathcal{O}(1/R), & 2\epsilon>1.
\end{cases}
\end{equation}
Thus, if the active client selection-induced gradient bias decays over training, the average
gradient norm remains controlled. In the standard nonconvex stochastic
optimization sense, the iterates approach a stationary point up to stochastic
noise and the residual bias-decay term.
In particular, when $\epsilon = 1/2$ and
$\eta = \min{1/(4L),1/\sqrt{R}}$,
the bound consists of three components:
(i) the optimization term $\mathcal{O}(1/\sqrt{R})$,
(ii) the stochastic-noise term $\mathcal{O}(\sigma^2/\sqrt{R})$,
and (iii) the bias-decay contribution
$\mathcal{O}(\log R / R)$.
As $R$ increases, both the optimization and bias terms vanish,
leaving only the standard stochastic-noise floor.
Therefore, the proposed client-selection strategy preserves the
convergence behavior of stochastic first-order optimization while
allowing a controlled, decaying selection bias~\cite{10.1007/s10107-022-01822-7}.

\subsubsection{Importance-Sampling Intepretation}
We now provide an importance-sampling interpretation explaining why
non-uniform (particularly DR-based non-uniform) client selection may reduce update variance and
support stable learning.

At round $r$, let $s_{c,r}\geq 0$ denote the DR score of client $c$.
Define the proposal distribution
\begin{equation}
q_{c,r}
=
\frac{s_{c,r}}
{\sum_{c=1}^C s_{j,r}}.
\end{equation}
This proposal assigns a larger sampling probability to clients with
larger DR scores, corresponding to more difficult or informative
trajectory scenarios.


Define the reweighted estimator
\begin{equation}
g_r^{\mathrm{IS}}
=
\frac{1}{C}
\sum_{c\in \mathcal{C}_r}
\frac{p_k}{q_{c,r}}g_{c,r},
\end{equation}
where $S_r$ is sampled according to $q_{c,r}$.
Then
\begin{equation}
\mathbb{E}_{q_r}[g_r^{\mathrm{IS}}]
=\frac{1}{C}
\sum_{c=1}^C  g_{c,r}.
\end{equation}
That is, our method can be viewed as a valid importance-sampling scheme, which is shown to significantly reduce variance \cite{alain2015variance}, thereby eventually leading to superior performance. 



\section{Evaluation Results}

\subsection{Experimental Setup}
\noindent \textbf{Dataset:} We use the Argoverse Motion Forecasting v1.1 dataset for training and evaluation, 
with 95,521 samples are from Pittsburgh, and 110,421 samples from Miami. 
The validation set contains 39,472 samples. 
All training and validation scenarios consist of 5-second trajectory segments, sampled at 10 Hz, along with map information. The Argoverse Motion Forecasting challenge is to predict future trajectories of 3 seconds of focal agents, using the past 2-second trajectories 
as inputs.


\noindent \textbf{Models and Training Parameters:} 
We use HiVT-64 \cite{zhou2022hivt} as our trajectory prediction model to evaluate the performance of our uncertainty-aware client selectors. We chose HiVT because, at the time of the preliminary version of our method, it represented the state of the art and was the first to capture global agent–agent interactions in traffic scenes.    
The local HiVT models are trained with the same parameters as the centralized HiVT. Specifically, for each local model in FLTP, we set the learning rate ($\eta$) to $5 \times 10^{-4}$, weight decay to $1 \times 10^{-4}$, dropout rate to 0.1, local batch size ($B$) to 32, local epochs ($E$) to 4, and the optimizer to AdamW. 
We train the model for 250 rounds. The fraction of clients selected for communication in each round ($f_1$) is set to be 0.1. 
We conduct experiments with two different random seeds and show the averaged results.

In our {\our} framework, we employ the \textbf{HPNet} model for trajectory prediction. HPNet is a hierarchical prediction network designed to capture both global and local dependencies in observed trajectories. The model takes as input a sequence of observed positions and outputs predicted trajectories for future timesteps. Key aspects of HPNet include hierarchical feature encoding, multi-modal prediction, and robust optimization techniques. HPNet is optimized using the Adam optimizer with a learning rate of $3 \times 10^{-4}$ and weight decay of $1 \times 10^{-4}$. The training is performed with a batch size of 32 over 250 global rounds. 
%
We compare {\our} against several baseline methods using a fixed client selection ratio of $f_1 = 30\%$. The notation and description for each baseline are summarized clearly in Table~\ref{tab:baselines}.

\begin{table}[!t]
\footnotesize
\renewcommand{\arraystretch}{1.15}
\setlength{\tabcolsep}{3pt}
\caption{Summary of baseline methods used in evaluation.}
\label{tab:baselines}
\centering
\begin{tabular*}{\columnwidth}{@{\extracolsep{\fill}} l p{0.7\columnwidth}}
\toprule
\textbf{Method} & \textbf{Description} \\
\midrule
Random & Clients are selected randomly at each training round. \\
High Loss & The top $f_1$ fraction (30\%) of clients with the highest training cross-entropy loss and Negative Log-Likelihood (NLL) are selected. \\
FLTP-NLL & Clients are selected based solely on Negative Log-Likelihood (NLL), without explicitly considering rank or loss separately. \\
High Rank + Random & The top-ranked $\nu$ clients are selected based on  complexity scores, and the remaining $(f_1C - \nu)$ clients are selected randomly. \\
{\our} & Clients are selected using a configurable parameter $\nu$ balancing between high-rank (trajectory  complexity) and high-loss criteria, totaling $f_1 = 30\%$ of all clients. \\
\bottomrule
\end{tabular*}
\end{table}

\noindent \textbf{Evaluation Metrics:} We use NLL, Minimum Average Displacement Error (minADE), Minimum Final Displacement Error (minFDE) and Miss Rate (MR) to evaluate model performance quantitatively. minADE measures the average L2 distance between the best predicted trajectory (the trajectory with the minimum error at the endpoint) and the ground truth. minFDE measures the L2 distance at the endpoint between the best predicted trajectory and the ground truth. MR quantifies the fraction of scenarios where the endpoint errors of all predicted trajectories exceed 2 meters.

\subsection{FLTP v.s. Training on Local Data}
We quantitatively compare the global model of FLTP with the local model of an arbitrarily chosen client when it does not participate in communication and only updates using its local data. In this case, we have chosen client 0; selecting any other client would yield similar results. As shown in Fig. \ref{fig:wofl} and Table \ref{i^*b:roundshotdata}, FLTP significantly outperforms the client without FL, demonstrating the effectiveness of FLTP in leveraging multi-source traffic data, even though it does not explicitly access raw local data. Specifically, after about 50 rounds, the local model of client 0 begins to exhibit worse performance as the number of training rounds increases, indicating that the local model without FL has poor generalization.

\begin{figure*}[ht]
\centering
\begin{subfigure}{0.27\linewidth}
\begin{tikzpicture}
\begin{axis}[
    xlabel={\scriptsize{Round}},
    ylabel={\scriptsize{$L_{reg}$}},
    every axis x label/.style={at={(current axis.south)},below=8pt},
    every axis y label/.style={
            at={(ticklabel* cs:0.5)},rotate=90,anchor=center,align=center, above=12.5pt},
    xmin=0, xmax=250,
    ymin=-0.5, ymax=0,
    xtick={0,50,100,150,200,250},
    xticklabels={\scriptsize{0},\scriptsize{50},\scriptsize{100},\scriptsize{150},\scriptsize{200},\scriptsize{250}},
    ytick={-0.5,-0.4,-0.3,-0.2,-0.1,0},
    yticklabels={\scriptsize{-0.5},\scriptsize{-0.4},\scriptsize{-0.3},\scriptsize{-0.2},\scriptsize{-0.1},\scriptsize{0}},
    height = 1\linewidth,
    width = 1\linewidth,
    xmajorgrids=true,
    ymajorgrids=true,
    grid style=dashed,
    legend cell align={left},
    legend style={inner sep=0pt,row sep=-3pt},
    legend style={font=\tiny},
    every axis plot/.append style={ultra thick}
]
    
    \addplot [
    color=Gray,
    line width=1pt
    ]
    table[x=roundshow,y=flloss,col sep = comma] {iros_data_noniid.csv};

    \addplot [
    color=Salmon,
    line width=1pt,
    ]
    table[x=roundshow,y=localloss,col sep = comma] {iros_data_noniid.csv};

\legend{FLTP, Client 0 w/o FL}

\end{axis}
\end{tikzpicture}
\vspace{-5pt}
\end{subfigure}
\hspace{-20pt}
\begin{subfigure}{0.27\linewidth}
\begin{tikzpicture}
\begin{axis}[
    xlabel={\scriptsize{Round}},
    ylabel={\scriptsize{minADE}},
    every axis x label/.style={at={(current axis.south)},below=8pt},
    every axis y label/.style={
            at={(ticklabel* cs:0.5)},rotate=90,anchor=center,align=center, above=13pt},
    xmin=0, xmax=250,
    ymin=0.7, ymax=1.5,
    xtick={0,50,100,150,200,250},
    xticklabels={\scriptsize{0},\scriptsize{50},\scriptsize{100},\scriptsize{150},\scriptsize{200},\scriptsize{250}},
    ytick={0.7,0.9,1.1,1.3,1.5},
    yticklabels={\scriptsize{0.7},\scriptsize{0.9},\scriptsize{1.1},\scriptsize{1.3},\scriptsize{1.5}},
    height = 1\linewidth,
    xmajorgrids=true,
    ymajorgrids=true,
    grid style=dashed,
    legend cell align={left},
    legend style={inner sep=0pt,row sep=-3pt},
    legend style={font=\tiny},
    every axis plot/.append style={ultra thick}
]
    
    \addplot [
    color=Gray,
    line width=1pt,
    ]
    table[x=roundshow,y=flade,col sep = comma] {iros_data_noniid.csv};

    \addplot [
    color=Salmon,
    line width=1pt,
    ]
    table[x=roundshow,y=localade,col sep = comma] {iros_data_noniid.csv};

\legend{FLTP, Client 0 w/o FL}
    
\end{axis}
\end{tikzpicture}
\vspace{-5pt}
\end{subfigure}
\hspace{-20pt}
\begin{subfigure}{0.27\linewidth}
\begin{tikzpicture}
\begin{axis}[
    xlabel={\scriptsize{Round}},
    ylabel={\scriptsize{minFDE}},
    every axis x label/.style={at={(current axis.south)},below=8pt},
    every axis y label/.style={
            at={(ticklabel* cs:0.5)},rotate=90,anchor=center,align=center, above=10pt},
    xmin=0, xmax=250,
    ymin=1.1, ymax=2.3,
    xtick={0,50,100,150,200,250},
    xticklabels={\scriptsize{0},\scriptsize{50},\scriptsize{100},\scriptsize{150},\scriptsize{200},\scriptsize{250}},
    ytick={1.1,1.3,1.5,1.7,1.9,2.1,2.3},
    yticklabels={\scriptsize{1.1},\scriptsize{1.3},\scriptsize{1.5},\scriptsize{1.7},\scriptsize{1.9},\scriptsize{2.1},\scriptsize{2.3}},
    height = 1\linewidth,
    xmajorgrids=true,
    ymajorgrids=true,
    grid style=dashed,
    legend cell align={left},
    legend style={inner sep=0pt,row sep=-3pt},
    legend style={font=\tiny},
    every axis plot/.append style={ultra thick}
]
    
    \addplot [
    color=Gray,
    line width=1pt,
    ]
    table[x=roundshow,y=flfde,col sep = comma] {iros_data_noniid.csv};

    \addplot [
    color=Salmon,
    line width=1pt,
    ]
    table[x=roundshow,y=localfde,col sep = comma] {iros_data_noniid.csv};

\legend{FLTP, Client 0 w/o FL}
    
\end{axis}
\end{tikzpicture}
\vspace{-5pt}
\end{subfigure}
\hspace{-20pt}
\begin{subfigure}{0.27\linewidth}
\begin{tikzpicture}
\begin{axis}[
    xlabel={\scriptsize{Round}},
    ylabel={\scriptsize{MR}},
    every axis x label/.style={at={(current axis.south)},below=8pt},
    every axis y label/.style={
            at={(ticklabel* cs:0.5)},rotate=90,anchor=center,align=center, above=13pt},
    xmin=0, xmax=250,
    ymin=0.1, ymax=0.5,
    xtick={0,50,100,150,200,250},
    xticklabels={\scriptsize{0},\scriptsize{50},\scriptsize{100},\scriptsize{150},\scriptsize{200},\scriptsize{250}},
    ytick={0.1,0.2,0.3,0.4,0.5},
    yticklabels={\scriptsize{0.1},\scriptsize{0.2},\scriptsize{0.3},\scriptsize{0.4},\scriptsize{0.5}},
    height = 1\linewidth,
    xmajorgrids=true,
    ymajorgrids=true,
    grid style=dashed,
    legend cell align={left},
    legend style={inner sep=0pt,row sep=-3pt},
    legend style={font=\tiny},
    every axis plot/.append style={ultra thick}
]
    
    \addplot [
    color=Gray,
    line width=1pt,
    ]
    table[x=roundshow,y=flmr,col sep = comma] {iros_data_noniid.csv};

    \addplot [
    color=Salmon,
    line width=1pt,
    ]
    table[x=roundshow,y=localmr,col sep = comma] {iros_data_noniid.csv};

\legend{FLTP, Client 0 w/o FL}
    
\end{axis}
\end{tikzpicture}
\vspace{-5pt}
\end{subfigure}
\caption{Round-wise comparison between FLTP and the local model of client 0 without FL. Fraction of clients selected for communication in each round $f_1$ is set to be 0.1.}
\label{fig:wofl}
\vspace{-5pt}
\end{figure*}

\begin{figure*}[ht]
\centering
\begin{subfigure}{0.27\linewidth}
\begin{tikzpicture}
\begin{axis}[
    xlabel={\scriptsize{Round}},
    ylabel={\scriptsize{$L_{reg}$}},
    every axis x label/.style={at={(current axis.south)},below=8pt},
    every axis y label/.style={
            at={(ticklabel* cs:0.5)},rotate=90,anchor=center,align=center, above=12.5pt},
    xmin=0, xmax=250,
    ymin=-0.22, ymax=0,
    xtick={0,50,100,150,200,250},
    xticklabels={\scriptsize{0},\scriptsize{50},\scriptsize{100},\scriptsize{150},\scriptsize{200},\scriptsize{250}},
    ytick={-0.2,-0.1,0,0.1,0.2,0.3,0.4,0.5,0.6},
    yticklabels={\scriptsize{-0.2},\scriptsize{-0.1},\scriptsize{0},\scriptsize{0.1},\scriptsize{0.2},\scriptsize{0.3},\scriptsize{0.3},\scriptsize{0.4},\scriptsize{0.5},\scriptsize{0.6}},
    height = 1\linewidth,
    width = 1\linewidth,
    xmajorgrids=true,
    ymajorgrids=true,
    grid style=dashed,
    legend cell align={left},
    legend style={inner sep=0pt,row sep=-3pt},
    legend style={font=\tiny},
    every axis plot/.append style={ultra thick}
]
    
    \addplot [
    color=Gray,
    line width=1pt
    ]
    table[x=roundshow,y=flloss,col sep = comma] {iros_data_noniid.csv};

    \addplot [
    color=BurntOrange,
    line width=1pt,
    ]
    table[x=roundshow,y=flalloss0.15,col sep = comma] {iros_data_noniid.csv};

    \addplot [
    color=Goldenrod,
    line width=1pt,
    ]
    table[x=roundshow,y=flalloss0.3,col sep = comma] {iros_data_noniid.csv};

\legend{FLTP,FLTP-NLL($f_2$=0.15),FLTP-NLL($f_2$=0.30)}

\end{axis}
\end{tikzpicture}
\vspace{-5pt}
\end{subfigure}
\hspace{-20pt}
\begin{subfigure}{0.27\linewidth}
\begin{tikzpicture}
\begin{axis}[
    xlabel={\scriptsize{Round}},
    ylabel={\scriptsize{minADE}},
    every axis x label/.style={at={(current axis.south)},below=8pt},
    every axis y label/.style={
            at={(ticklabel* cs:0.5)},rotate=90,anchor=center,align=center, above=13pt},
    xmin=0, xmax=250,
    ymin=0.72, ymax=0.85,
    xtick={0,50,100,150,200,250},
    xticklabels={\scriptsize{0},\scriptsize{50},\scriptsize{100},\scriptsize{150},\scriptsize{200},\scriptsize{250}},
    ytick={0.72,0.74,0.76,0.78,0.80,0.82,0.84},
    yticklabels={\scriptsize{0.72},\scriptsize{0.74},\scriptsize{0.76},\scriptsize{0.78},\scriptsize{0.80},\scriptsize{0.82},\scriptsize{0.84}},
    height = 1\linewidth,
    xmajorgrids=true,
    ymajorgrids=true,
    grid style=dashed,
    legend cell align={left},
    legend style={inner sep=0pt,row sep=-3pt},
    legend style={font=\tiny},
    every axis plot/.append style={ultra thick}
]
    
    \addplot [
    color=Gray,
    line width=1pt,
    ]
    table[x=roundshow,y=flade,col sep = comma] {iros_data_noniid.csv};

    \addplot [
    color=BurntOrange,
    line width=1pt,
    ]
    table[x=roundshow,y=flalade0.15,col sep = comma] {iros_data_noniid.csv};

    \addplot [
    color=Goldenrod,
    line width=1pt,
    ]
    table[x=roundshow,y=flalade0.3,col sep = comma] {iros_data_noniid.csv};



\legend{FLTP,FLTP-NLL($f_2$=0.15),FLTP-NLL($f_2$=0.30)}
    
\end{axis}
\end{tikzpicture}
\vspace{-5pt}
\end{subfigure}
\hspace{-20pt}
\begin{subfigure}{0.27\linewidth}
\begin{tikzpicture}
\begin{axis}[
    xlabel={\scriptsize{Round}},
    ylabel={\scriptsize{minFDE}},
    every axis x label/.style={at={(current axis.south)},below=8pt},
    every axis y label/.style={
            at={(ticklabel* cs:0.5)},rotate=90,anchor=center,align=center, above=10pt},
    xmin=0, xmax=250,
    ymin=1.1, ymax=1.4,
    xtick={0,50,100,150,200,250},
    xticklabels={\scriptsize{0},\scriptsize{50},\scriptsize{100},\scriptsize{150},\scriptsize{200},\scriptsize{250}},
    ytick={1.1,1.2,1.3,1.4},
    yticklabels={\scriptsize{1.1},\scriptsize{1.2},\scriptsize{1.3},\scriptsize{1.4}},
    height = 1\linewidth,
    xmajorgrids=true,
    ymajorgrids=true,
    grid style=dashed,
    legend cell align={left},
    legend style={inner sep=0pt,row sep=-3pt},
    legend style={font=\tiny},
    every axis plot/.append style={ultra thick}
]
    
    \addplot [
    color=Gray,
    line width=1pt,
    ]
    table[x=roundshow,y=flfde,col sep = comma] {iros_data_noniid.csv};

    \addplot [
    color=BurntOrange,
    line width=1pt,
    ]
    table[x=roundshow,y=flalfde0.15,col sep = comma] {iros_data_noniid.csv};

    \addplot [
    color=Goldenrod,
    line width=1pt,
    ]
    table[x=roundshow,y=flalfde0.3,col sep = comma] {iros_data_noniid.csv};

    

\legend{FLTP,FLTP-NLL($f_2$=0.15),FLTP-NLL($f_2$=0.30)}
    
\end{axis}
\end{tikzpicture}
\vspace{-5pt}
\end{subfigure}
\hspace{-20pt}
\begin{subfigure}{0.27\linewidth}
\begin{tikzpicture}
\begin{axis}[
    xlabel={\scriptsize{Round}},
    ylabel={\scriptsize{MR}},
    every axis x label/.style={at={(current axis.south)},below=8pt},
    every axis y label/.style={
            at={(ticklabel* cs:0.5)},rotate=90,anchor=center,align=center, above=13pt},
    xmin=0, xmax=250,
    ymin=0.11, ymax=0.15,
    xtick={0,50,100,150,200,250},
    xticklabels={\scriptsize{0},\scriptsize{50},\scriptsize{100},\scriptsize{150},\scriptsize{200},\scriptsize{250}},
    ytick={0.11,0.12,0.13,0.14,0.15},
    yticklabels={\scriptsize{0.11},\scriptsize{0.12},\scriptsize{0.13},\scriptsize{0.14},\scriptsize{0.15}},
    height = 1\linewidth,
    xmajorgrids=true,
    ymajorgrids=true,
    grid style=dashed,
    legend cell align={left},
    legend style={inner sep=0pt,row sep=-3pt},
    legend style={font=\tiny},
    every axis plot/.append style={ultra thick}
]
    
    \addplot [
    color=Gray,
    line width=1pt,
    ]
    table[x=roundshow,y=flmr,col sep = comma] {iros_data_noniid.csv};

    \addplot [
    color=BurntOrange,
    line width=1pt,
    ]
    table[x=roundshow,y=flalmr0.15,col sep = comma] {iros_data_noniid.csv};

    \addplot [
    color=Goldenrod,
    line width=1pt,
    ]
    table[x=roundshow,y=flalmr0.3,col sep = comma] {iros_data_noniid.csv};



\legend{FLTP,FLTP-NLL($f_2$=0.15),FLTP-NLL($f_2$=0.30)}
    
\end{axis}
\end{tikzpicture}
\vspace{-5pt}
\end{subfigure}
\caption{Round-wise comparison between FLTP and FLTP-NLL}
\label{fig:alfltpnll}
\vspace{-5pt}
\end{figure*}

\begin{figure*}[ht]
\centering
\begin{subfigure}{0.27\linewidth}
\begin{tikzpicture}
\begin{axis}[
    xlabel={\scriptsize{Round}},
    ylabel={\scriptsize{$L_{reg}$}},
    every axis x label/.style={at={(current axis.south)},below=8pt},
    every axis y label/.style={
            at={(ticklabel* cs:0.5)},rotate=90,anchor=center,align=center, above=12.5pt},
    xmin=0, xmax=250,
    ymin=-0.22, ymax=0,
    xtick={0,50,100,150,200,250},
    xticklabels={\scriptsize{0},\scriptsize{50},\scriptsize{100},\scriptsize{150},\scriptsize{200},\scriptsize{250}},
    ytick={-0.2,-0.1,0,0.1,0.2,0.3,0.4,0.5,0.6},
    yticklabels={\scriptsize{-0.2},\scriptsize{-0.1},\scriptsize{0},\scriptsize{0.1},\scriptsize{0.2},\scriptsize{0.3},\scriptsize{0.3},\scriptsize{0.4},\scriptsize{0.5},\scriptsize{0.6}},
    height = 1\linewidth,
    width = 1\linewidth,
    xmajorgrids=true,
    ymajorgrids=true,
    grid style=dashed,
    legend cell align={left},
    legend style={inner sep=0pt,row sep=-3pt},
    legend style={font=\tiny},
    every axis plot/.append style={ultra thick}
]
    
    \addplot [
    color=Gray,
    line width=1pt
    ]
    table[x=roundshow,y=flloss,col sep = comma] {iros_data_noniid.csv};



    \addplot [
    color=LimeGreen,
    line width=1pt,
    ]
    table[x=roundshow,y=auloss0.15,col sep = comma] {iros_data_noniid.csv};

    \addplot [
    color=Cyan,
    line width=1pt,
    ]
    table[x=roundshow,y=auloss0.3,col sep = comma] {iros_data_noniid.csv};

\legend{FLTP,FLTP-AU($f_2$=0.15),FLTP-AU($f_2$=0.30)}

\end{axis}
\end{tikzpicture}
\vspace{-5pt}
\end{subfigure}
\hspace{-20pt}
\begin{subfigure}{0.27\linewidth}
\begin{tikzpicture}
\begin{axis}[
    xlabel={\scriptsize{Round}},
    ylabel={\scriptsize{minADE}},
    every axis x label/.style={at={(current axis.south)},below=8pt},
    every axis y label/.style={
            at={(ticklabel* cs:0.5)},rotate=90,anchor=center,align=center, above=13pt},
    xmin=0, xmax=250,
    ymin=0.72, ymax=0.85,
    xtick={0,50,100,150,200,250},
    xticklabels={\scriptsize{0},\scriptsize{50},\scriptsize{100},\scriptsize{150},\scriptsize{200},\scriptsize{250}},
    ytick={0.72,0.74,0.76,0.78,0.80,0.82,0.84},
    yticklabels={\scriptsize{0.72},\scriptsize{0.74},\scriptsize{0.76},\scriptsize{0.78},\scriptsize{0.80},\scriptsize{0.82},\scriptsize{0.84}},
    height = 1\linewidth,
    xmajorgrids=true,
    ymajorgrids=true,
    grid style=dashed,
    legend cell align={left},
    legend style={inner sep=0pt,row sep=-3pt},
    legend style={font=\tiny},
    every axis plot/.append style={ultra thick}
]
    
    \addplot [
    color=Gray,
    line width=1pt,
    ]
    table[x=roundshow,y=flade,col sep = comma] {iros_data_noniid.csv};



    \addplot [
    color=LimeGreen,
    line width=1pt,
    ]
    table[x=roundshow,y=auade0.15,col sep = comma] {iros_data_noniid.csv};

    \addplot [
    color=Cyan,
    line width=1pt,
    ]
    table[x=roundshow,y=auade0.3,col sep = comma] {iros_data_noniid.csv};

\legend{FLTP,FLTP-AU($f_2$=0.15),FLTP-AU($f_2$=0.30)}
    
\end{axis}
\end{tikzpicture}
\vspace{-5pt}
\end{subfigure}
\hspace{-20pt}
\begin{subfigure}{0.27\linewidth}
\begin{tikzpicture}
\begin{axis}[
    xlabel={\scriptsize{Round}},
    ylabel={\scriptsize{minFDE}},
    every axis x label/.style={at={(current axis.south)},below=8pt},
    every axis y label/.style={
            at={(ticklabel* cs:0.5)},rotate=90,anchor=center,align=center, above=10pt},
    xmin=0, xmax=250,
    ymin=1.1, ymax=1.4,
    xtick={0,50,100,150,200,250},
    xticklabels={\scriptsize{0},\scriptsize{50},\scriptsize{100},\scriptsize{150},\scriptsize{200},\scriptsize{250}},
    ytick={1.1,1.2,1.3,1.4},
    yticklabels={\scriptsize{1.1},\scriptsize{1.2},\scriptsize{1.3},\scriptsize{1.4}},
    height = 1\linewidth,
    xmajorgrids=true,
    ymajorgrids=true,
    grid style=dashed,
    legend cell align={left},
    legend style={inner sep=0pt,row sep=-3pt},
    legend style={font=\tiny},
    every axis plot/.append style={ultra thick}
]
    
    \addplot [
    color=Gray,
    line width=1pt,
    ]
    table[x=roundshow,y=flfde,col sep = comma] {iros_data_noniid.csv};



    \addplot [
    color=LimeGreen,
    line width=1pt,
    ]
    table[x=roundshow,y=aufde0.15,col sep = comma] {iros_data_noniid.csv};
    
    \addplot [
    color=Cyan,
    line width=1pt,
    ]
    table[x=roundshow,y=aufde0.3,col sep = comma] {iros_data_noniid.csv};

\legend{FLTP,FLTP-AU($f_2$=0.15),FLTP-AU($f_2$=0.30)}
    
\end{axis}
\end{tikzpicture}
\vspace{-5pt}
\end{subfigure}
\hspace{-20pt}
\begin{subfigure}{0.27\linewidth}
\begin{tikzpicture}
\begin{axis}[
    xlabel={\scriptsize{Round}},
    ylabel={\scriptsize{MR}},
    every axis x label/.style={at={(current axis.south)},below=8pt},
    every axis y label/.style={
            at={(ticklabel* cs:0.5)},rotate=90,anchor=center,align=center, above=13pt},
    xmin=0, xmax=250,
    ymin=0.11, ymax=0.15,
    xtick={0,50,100,150,200,250},
    xticklabels={\scriptsize{0},\scriptsize{50},\scriptsize{100},\scriptsize{150},\scriptsize{200},\scriptsize{250}},
    ytick={0.11,0.12,0.13,0.14,0.15},
    yticklabels={\scriptsize{0.11},\scriptsize{0.12},\scriptsize{0.13},\scriptsize{0.14},\scriptsize{0.15}},
    height = 1\linewidth,
    xmajorgrids=true,
    ymajorgrids=true,
    grid style=dashed,
    legend cell align={left},
    legend style={inner sep=0pt,row sep=-3pt},
    legend style={font=\tiny},
    every axis plot/.append style={ultra thick}
]
    
    \addplot [
    color=Gray,
    line width=1pt,
    ]
    table[x=roundshow,y=flmr,col sep = comma] {iros_data_noniid.csv};



    \addplot [
    color=LimeGreen,
    line width=1pt,
    ]
    table[x=roundshow,y=aumr0.15,col sep = comma] {iros_data_noniid.csv};

    \addplot [
    color=Cyan,
    line width=1pt,
    ]
    table[x=roundshow,y=aumr0.3,col sep = comma] {iros_data_noniid.csv};

\legend{FLTP,FLTP-AU($f_2$=0.15),FLTP-AU($f_2$=0.30)}
    
\end{axis}
\end{tikzpicture}
\vspace{-5pt}
\end{subfigure}
\caption{Round-wise comparison between FLTP and FLTP-AU}
\label{fig:alfltpau}
\vspace{-5pt}
\end{figure*}

\begin{table}[ht]
\setlength\tabcolsep{2pt}
  \centering
  \renewcommand{\arraystretch}{1.1}
  \begin{tabular}{l|c|c|c|c|c}
    \hline
    Model & Round & NLL($\downarrow$) & minADE($\downarrow$) & minFDE($\downarrow$) & MR($\downarrow$)\\
    \hline
    Centralized HiVT \cite{zhou2022hivt} & - & 0.467 & 0.685 & 1.028 & 0.104\\
    \hline
    Client 0 w/o FL & 50 & 0.897 & 0.992 & 1.732 & 0.207\\
    FLTP & 50 & 0.632 & \textbf{0.818} & \textbf{1.321} & 0.143\\ 
    FLTP-NLL($f_2$=0.15) & 50 & 0.634 & 0.821 & 1.327 & \textbf{0.141}\\
    FLTP-NLL($f_2$=0.30) & 50 & 0.637 & 0.819 & \textbf{1.321} & 0.143\\
    FLTP-AU($f_2$=0.15) & 50 & 0.632 & 0.819 & 1.325 & \textbf{0.141}\\
    FLTP-AU($f_2$=0.30) & 50 & \textbf{0.631} & \textbf{0.818} & 1.329 & 0.143\\
    \hline
    Client 0 w/o FL & 150 & 1.098 & 1.026 & 1.822 & 0.229\\
    FLTP & 150 & 0.556 & 0.754 & 1.181 & 0.124\\ 
    FLTP-NLL($f_2$=0.15) & 150 & \textbf{0.555} & \textbf{0.753} & 1.182 & 0.122\\
    FLTP-NLL($f_2$=0.30) & 150 & 0.564 & 0.754 & 1.177 & 0.122\\
    FLTP-AU($f_2$=0.15) & 150 & 0.556 & \textbf{0.753} & 1.177 & \textbf{0.121}\\
    FLTP-AU($f_2$=0.30) & 150 & 0.556 & \textbf{0.753} & \textbf{1.176} & \textbf{0.121}\\
    \hline
    Client 0 w/o FL & 250 & 1.259 & 1.059 & 1.896 & 0.245\\
    FLTP & 250 & \textbf{0.528} & 0.731 & \textbf{1.126} & 0.115\\ 
    FLTP-NLL($f_2$=0.15) & 250 & 0.533 & 0.733 & 1.135 & 0.116\\
    FLTP-NLL($f_2$=0.30) & 250 & 0.541 & 0.737 & 1.137 & \textbf{0.114}\\
    FLTP-AU($f_2$=0.15) & 250 & \textbf{0.528} & \textbf{0.729} & \textbf{1.126} & 0.116\\
    FLTP-AU($f_2$=0.30) & 250 & 0.529 & 0.730 & 1.130 & 0.115\\
    \hline
  \end{tabular}
  \caption{Performance on Argoverse Validation Set. Fraction of clients selected for communication in each round $f_1$ is set to be 0.1.}
  \label{i^*b:roundshotdata}
  \vspace{-5pt}
\end{table}

\subsection{Comparison of Uncertainty-aware Selectors}
From Fig. \ref{fig:alfltpnll} and Fig. \ref{fig:alfltpau},  we observe the following:
\begin{itemize}
    \item \textbf{Convergence speed of training loss:} The regression losses of both FLTP-NLL and FLTP-AU with $f_2=0.30$ converge faster than with $f_2=0.15$ and vanilla FLTP.
    \item \textbf{Round-wise validation performance:} FLTP-AU with $f_2=0.30$ performs better than FLTP in most rounds in terms of MR. As MR measures the fraction of scenarios with endpoint errors larger than 2 meters, a lower MR indicates that FLTP-AU is more robust to various traffic scenarios in the inference stage.
    \item \textbf{Impact of biased selection-NLL:} After around 100 rounds, FLTP surpasses the performance of FLTP-NLL with both $f_2 = 0.15$ and $f_2 = 0.30$ in terms of minADE and minFDE due to the biased selection. Notably, FLTP-NLL with $f_2 = 0.30$ performs worse than with $f_2 = 0.15$ because the former introduces more bias.
    \item \textbf{Impact of biased selection-AU:} 
    FLTP-AU with $f_2=0.30$ achieves comparable minADE and minFDE to FLTP while outperforming in terms of MR in most rounds. This indicates that a larger $f_2$ helps FLTP-AU to select clients with more representative data, improving robustness.
\end{itemize}

 Table \ref{i^*b:roundshotdata} shows the detailed global model performance of specific rounds, where we can see that: 
 \begin{itemize}
    \item In the 50th round, FLTP-AU with $f_2=0.15$ outperforms other frameworks in terms of MR, while FLTP-AU with $f_2=0.30$ outperforms other frameworks in terms of NLL and minADE.
    \item In the 150th round, FLTP-AU with both $f_2=0.15$ and $f_2=0.30$ outperform other frameworks in terms of MR.
    \item In the 250th round, where global models finish training, FLTP-AU with $f_2=0.15$ outperforms other frameworks in terms of NLL, minADE and minFDE.
    \item Although FLTP-based HiVT models in the 250th round perform slightly worse than the centralized HiVT, they ensure the privacy of human-driven vehicles by avoiding data exchange with the server.
 \end{itemize}

In summary, FLTP-AU converges faster in regression loss and achieves better performance in terms of MR than FLTP in most rounds. Moreover, FLTP-AU shows better and more stable round-wise performance than FLTP-NLL.

\subsection{Overall Performance Comparison under HPNet}

As mentioned in Section~\ref{subsec: active learning} and Section~\ref{subsec: complexity selector},
We adopt a complexity-scoring mechanism to rank each trajectory by its complexity. To simulate realistic non-IID data distributions across autonomous vehicles, we partition the Argoverse dataset into \( C \) clients, where each client \( c \in \mathcal{C} \triangleq \{1, \dots, C\} \) represents one ego vehicle collecting local driving scenario data. We use a Dirichlet distribution parameterized by \( \alpha \) to assign trajectory samples to clients. The parameter \( \alpha \) controls the degree of data heterogeneity: a smaller \( \alpha \) results in highly imbalanced data partitions across clients, while a larger \( \alpha \) leads to more uniform distributions.

To evaluate the robustness of different client selection strategies under varying levels of heterogeneity, we consider three different Dirichlet settings, namely \( \alpha = 0.1 \), \( \alpha = 0.5 \), and \( \alpha = 1 \). The setting \( \alpha = 0.1 \) represents a highly heterogeneous and severely non-IID environment, where clients contain highly skewed trajectory complexity distributions. In contrast, \( \alpha = 1 \) corresponds to a relatively balanced distribution across clients, while \( \alpha = 0.5 \) represents a moderate level of heterogeneity. This experimental design enables a comprehensive evaluation of how effectively different client selection methods adapt to varying degrees of client diversity and trajectory complexity imbalance.

\begin{table}[ht]
\centering
\caption{Performance comparison across selection methods (selection ratio $f_1=30\%$).}
\label{tab:performance}
\begin{tabular}{lcccc}
\toprule
\hline
\textbf{Method}  & \textbf{\( \alpha \)} & \textbf{minADE} & \textbf{minFDE} & \textbf{Miss Rate (MR)} \\
\midrule

Random  & 0.1 & 0.754 & 1.310 & 0.121 \\
High Loss  & 0.1& 0.703 & 1.080 & 0.115 \\
FLTP-NLL & 0.1& 0.712 & 1.140 & 0.101 \\
High Rank + Random & 0.1 & 0.691 & 1.010 & 0.103 \\
\textbf{{\our}} & 0.1 & \textbf{0.669} & \textbf{0.910} &  \textbf{0.101} \\  \hline
Random  & 0.5 & 0.722 & 1.210 & 0.121 \\
High Loss  & 0.5& 0.703 & 1.080 & 0.112 \\
FLTP-NLL & 0.5& 0.712 & 1.140 & 0.118 \\
High Rank + Random & 0.5 & 0.701 & 1.010 & 0.102 \\
\textbf{{\our}} & 0.5 & \textbf{0.682} & \textbf{0.891} & \textbf{0.096} \\ \hline
Random  & 1 & 0.701 & 1.121 & 0.103 \\
High Loss  & 1& 0.703 & 1.080 & 0.115 \\
FLTP-NLL & 1& 0.704 & 1.140 & 0.101 \\
High Rank + Random & 1 & 0.703 & 1.007 & 0.097 \\
\textbf{{\our}} & 1 & \textbf{0.698} & \textbf{0.901} &  \textbf{0.093} \\ \hline
\bottomrule
\end{tabular}
\end{table}

Table~\ref{tab:performance} summarizes the performance comparison of different client selection methods under the HPNet model, specifically evaluated using minADE, minFDE, and MR with a fixed selection ratio of 30\% (i.e., $f_1 = 0.3$).
Our proposed method, {\our}, consistently outperforms the baseline methods across all evaluation metrics under the three Dirichlet settings. 
Under the highly non-IID setting (\(\alpha = 0.1\)), {\our} achieves the best overall performance, obtaining the lowest minADE (0.669), minFDE (0.910), and MR (0.101). Compared with Random selection, it reduces minADE by 11.3\% and minFDE by 30.5\%, demonstrating the benefit of complexity-aware client selection under severe data heterogeneity.
As \(\alpha\) increases from 0.5 to 1, the performance gap among methods narrows because client data distributions become more balanced. Nevertheless, {\our} consistently achieves the best results across all heterogeneity levels, attaining the lowest minADE, minFDE, and MR for both \(\alpha = 0.5\) and \(\alpha = 1\). This indicates that the proposed dynamic ranking strategy effectively balances trajectory complexity and client diversity, resulting in robust and stable performance under varying degrees of non-IID data distributions.

Figures~\ref{fig:alpha01}-\ref{fig:alpha1} further illustrate the training progression across communication rounds. We observe that Random selection yields the worst performance across all three prediction metrics. In contrast, {\our} consistently achieves superior performance throughout all training rounds, demonstrating improved convergence stability and robustness compared with the baseline methods. Additionally, {\our} not only reduces minADE, but also further decreases minFDE compared to the other selection strategies. This indicates that {\our} effectively reduces prediction errors, especially in the worst-case trajectory prediction scenarios.

Moreover, the convergence curves show that {\our} maintains more stable optimization behavior across different heterogeneity settings. 
Specifically, under smaller \( \alpha \) values, where client distributions are more imbalanced, the baseline methods exhibit larger fluctuations and slower convergence, whereas {\our} converges more steadily and achieves lower final prediction errors. These observations further demonstrate the effectiveness of our proposed dynamic ranking strategy in mitigating the negative impact of client heterogeneity during federated trajectory prediction training.

\begin{figure*}[!t]
\centering

\subfloat[minADE\label{fig:minADE01}]{
    \includegraphics[width=0.31\textwidth]{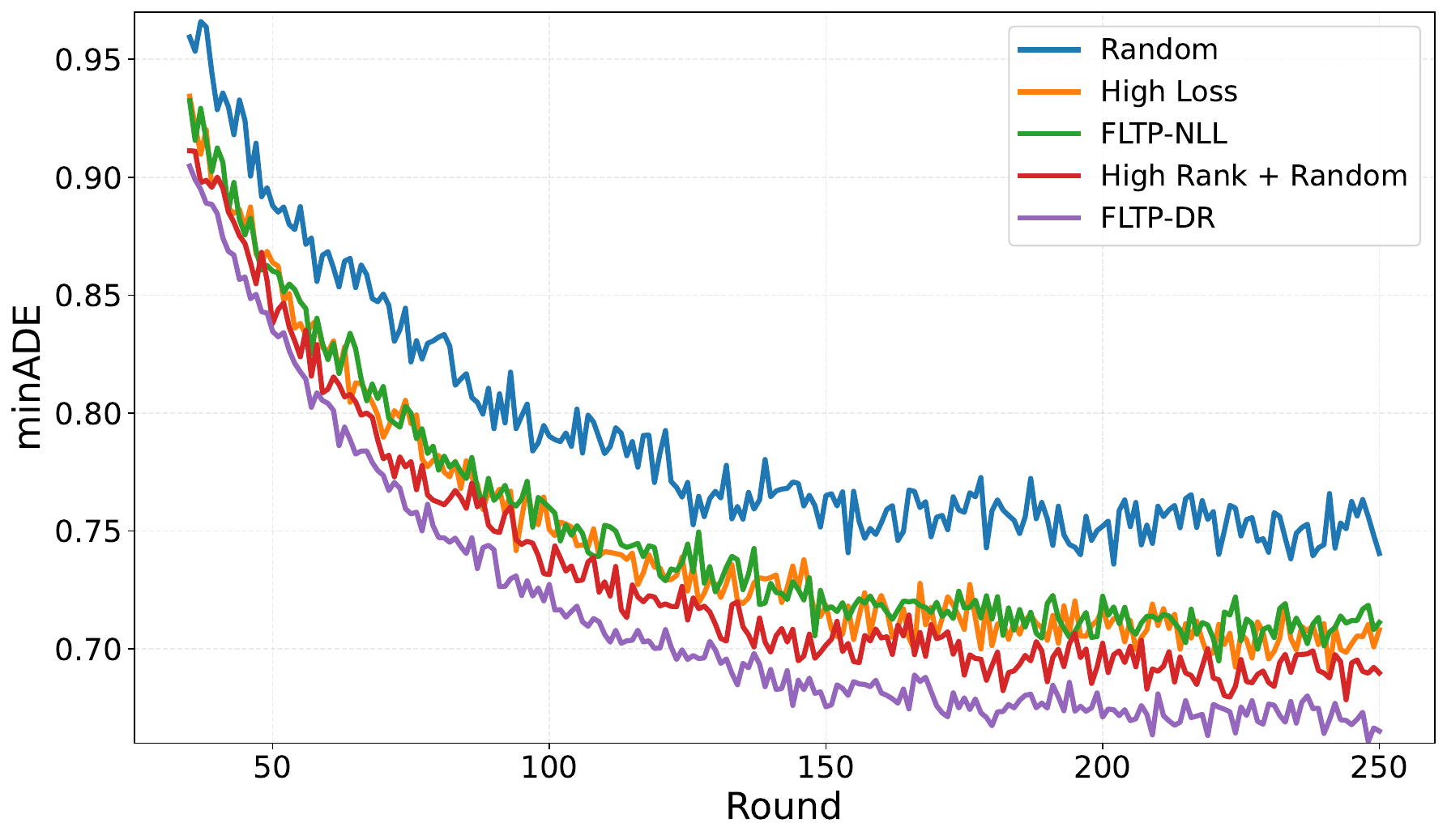}
}
\hfill
\subfloat[minFDE\label{fig:minFDE01}]{
    \includegraphics[width=0.31\textwidth]{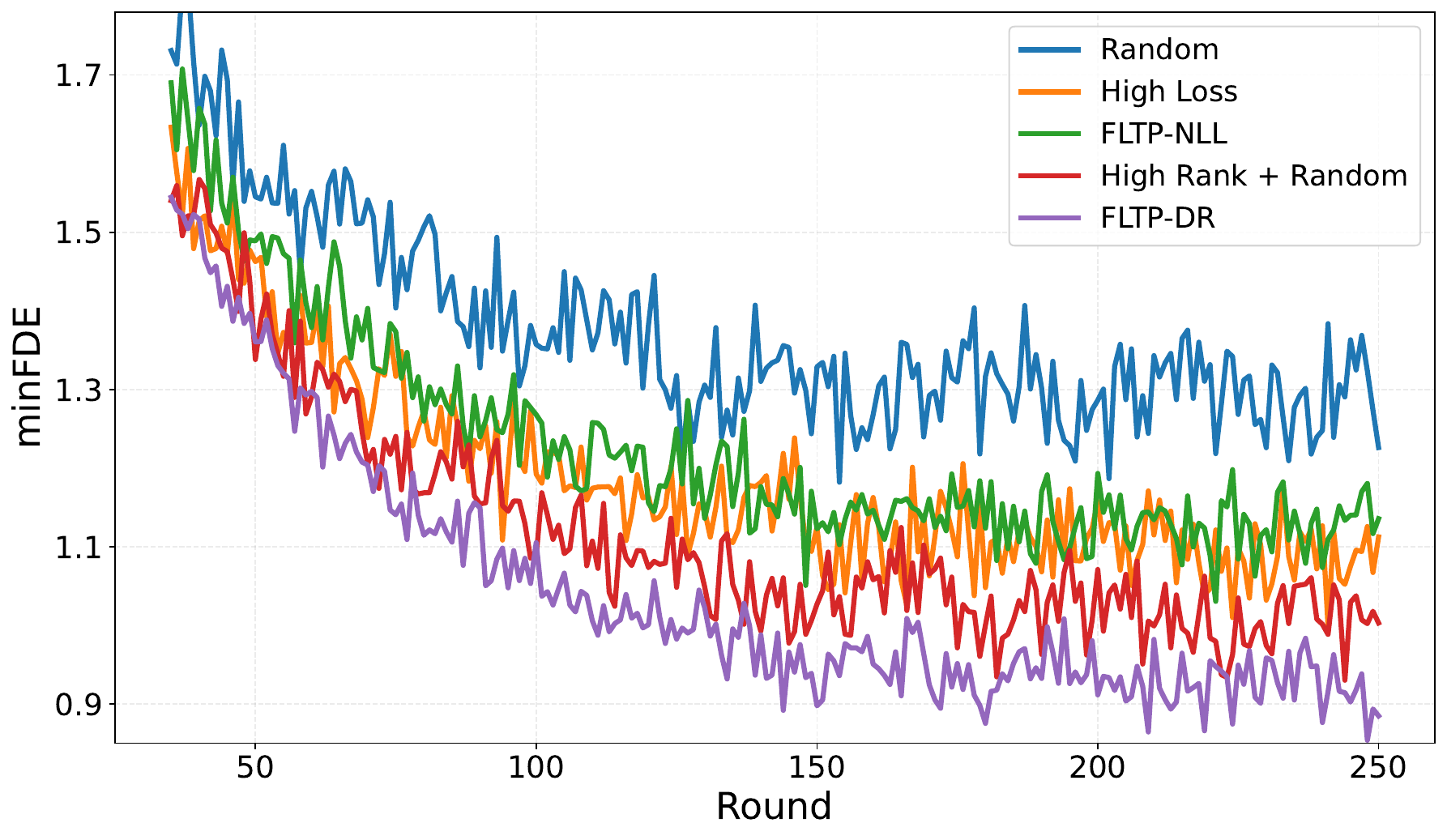}
}
\hfill
\subfloat[MR\label{fig:MR01}]{
    \includegraphics[width=0.31\textwidth]{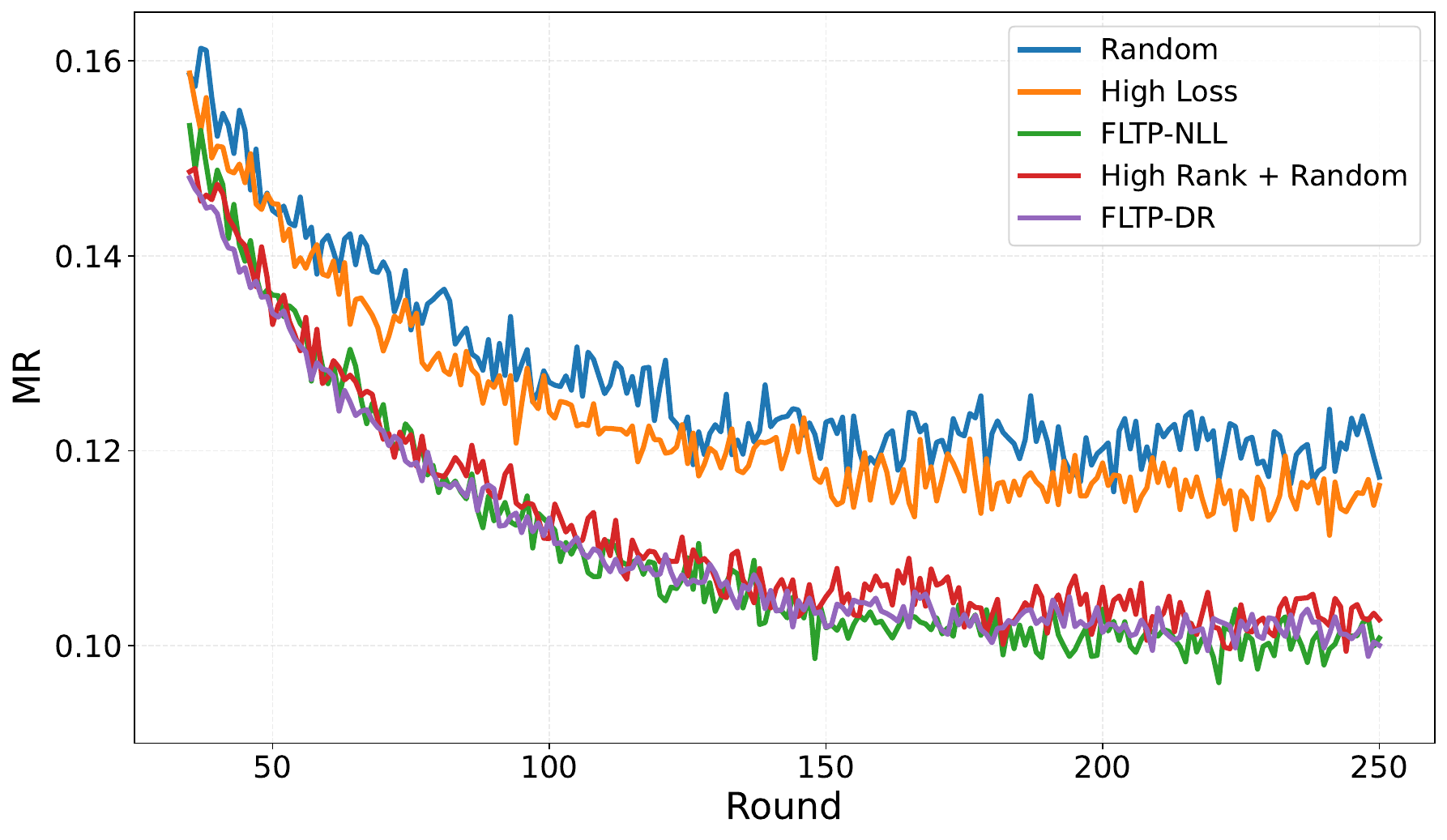}
}

\caption{Convergence behavior under $\alpha=0.1$.}
\label{fig:alpha01}
\end{figure*}

\begin{figure*}[!t]
\centering

\subfloat[minADE\label{fig:minADE05}]{
    \includegraphics[width=0.31\textwidth]{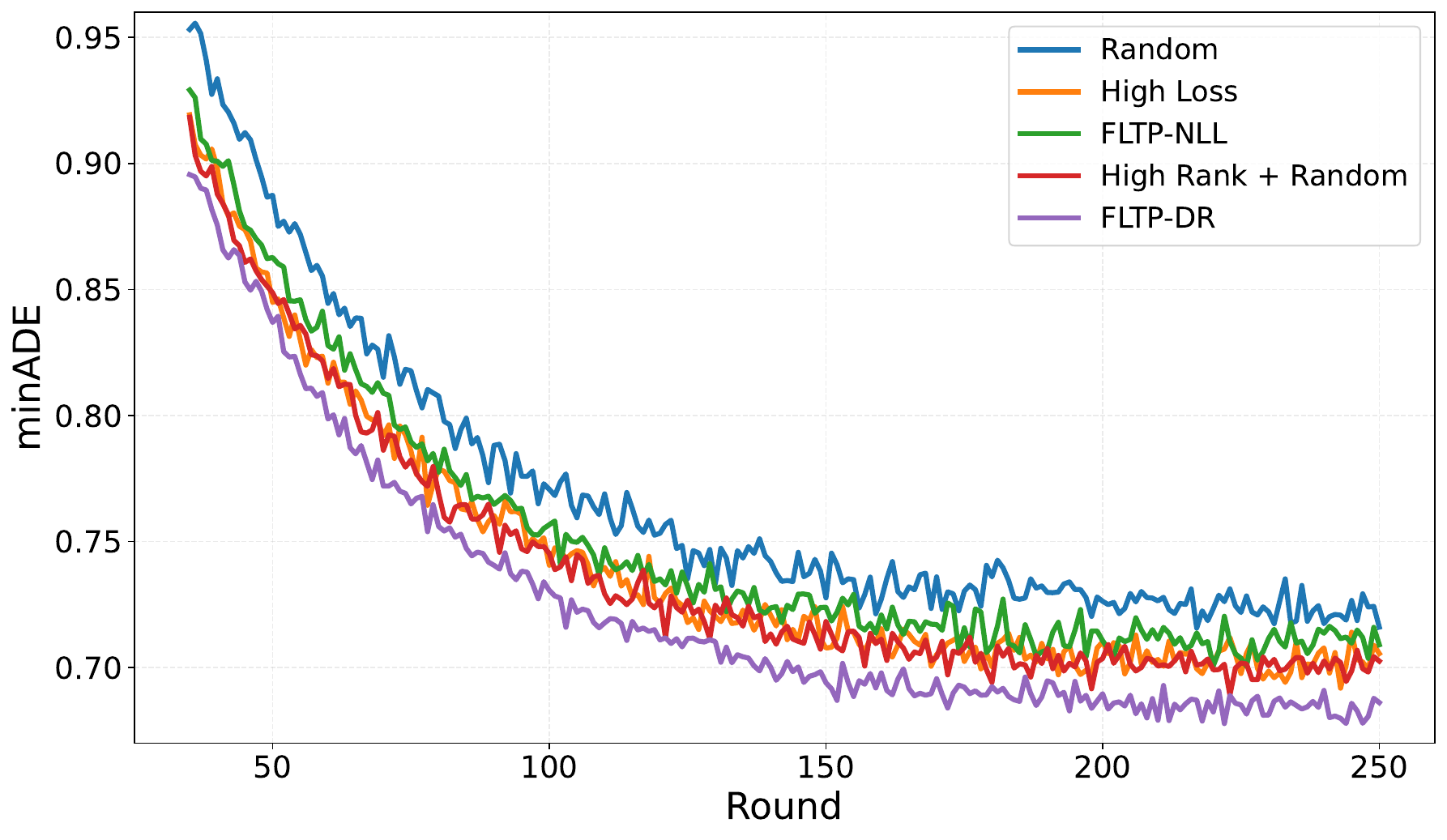}
}
\hfill
\subfloat[minFDE\label{fig:minFDE05}]{
    \includegraphics[width=0.31\textwidth]{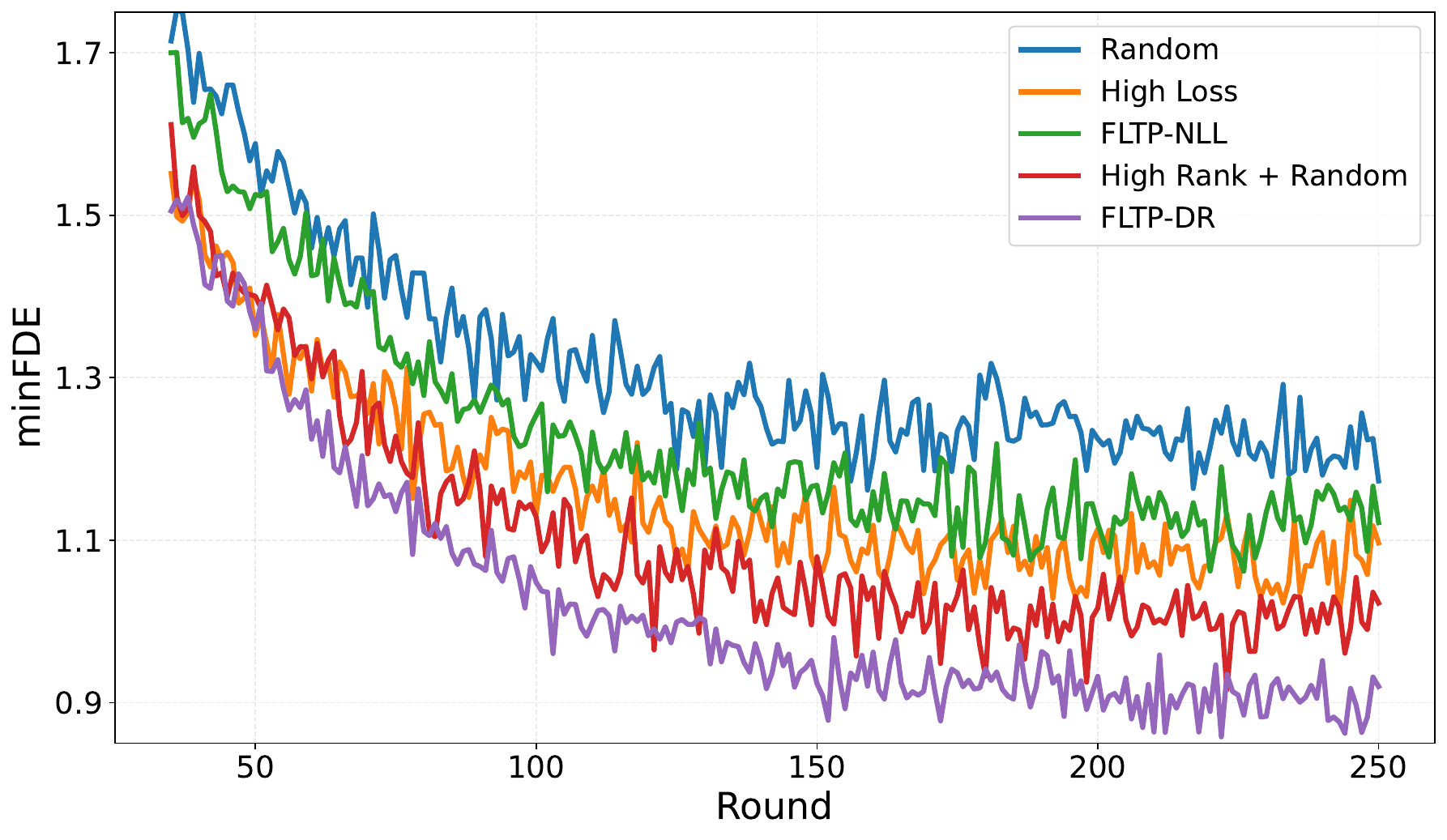}
}
\hfill
\subfloat[MR\label{fig:MR05}]{
    \includegraphics[width=0.31\textwidth]{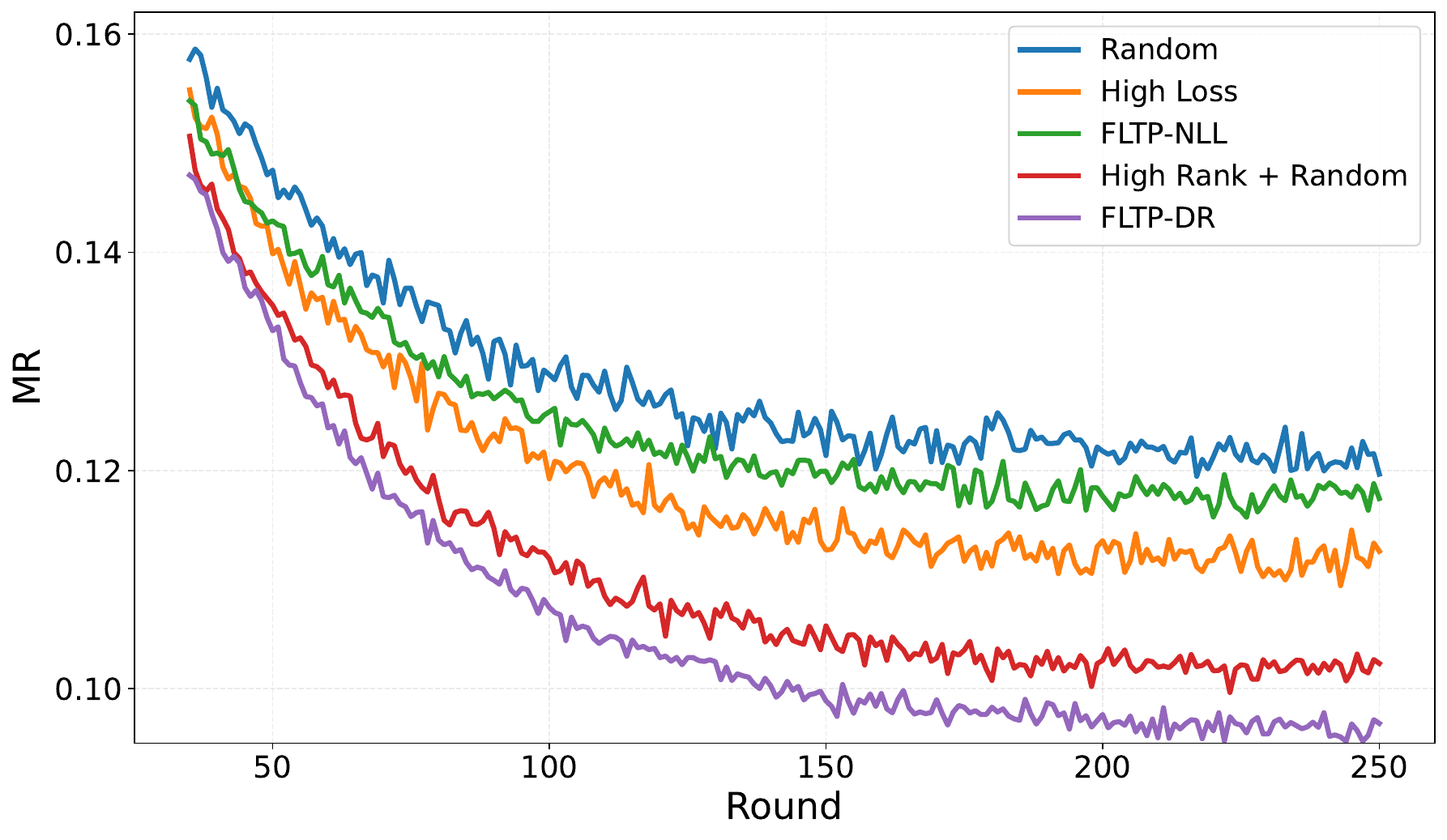}
}

\caption{Convergence behavior under $\alpha=0.5$.}
\label{fig:alpha05}
\end{figure*}

\begin{figure*}[!t]
\centering

\subfloat[minADE\label{fig:minADE1}]{
    \includegraphics[width=0.31\textwidth]{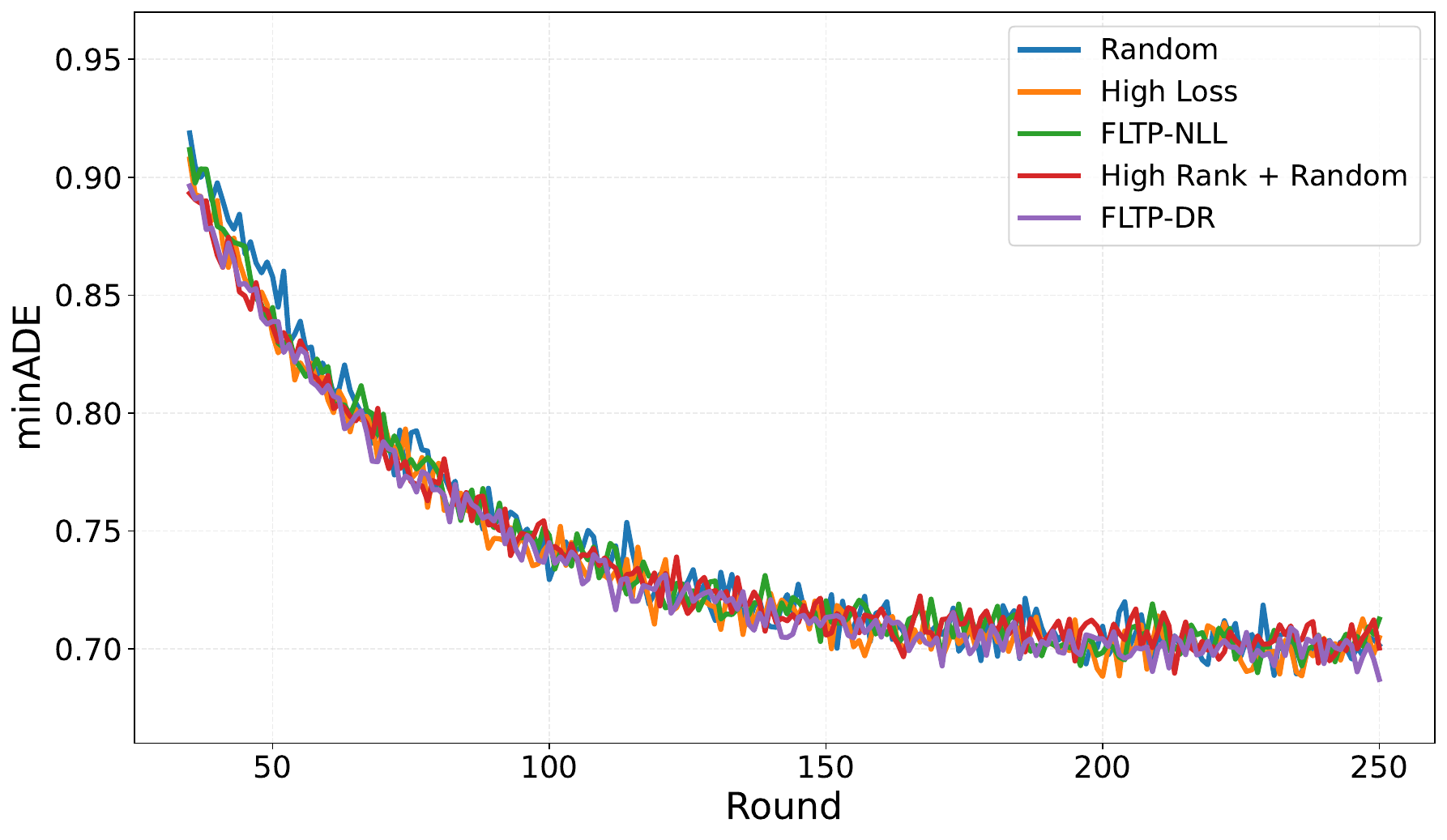}
}
\hfill
\subfloat[minFDE\label{fig:minFDE1}]{
    \includegraphics[width=0.31\textwidth]{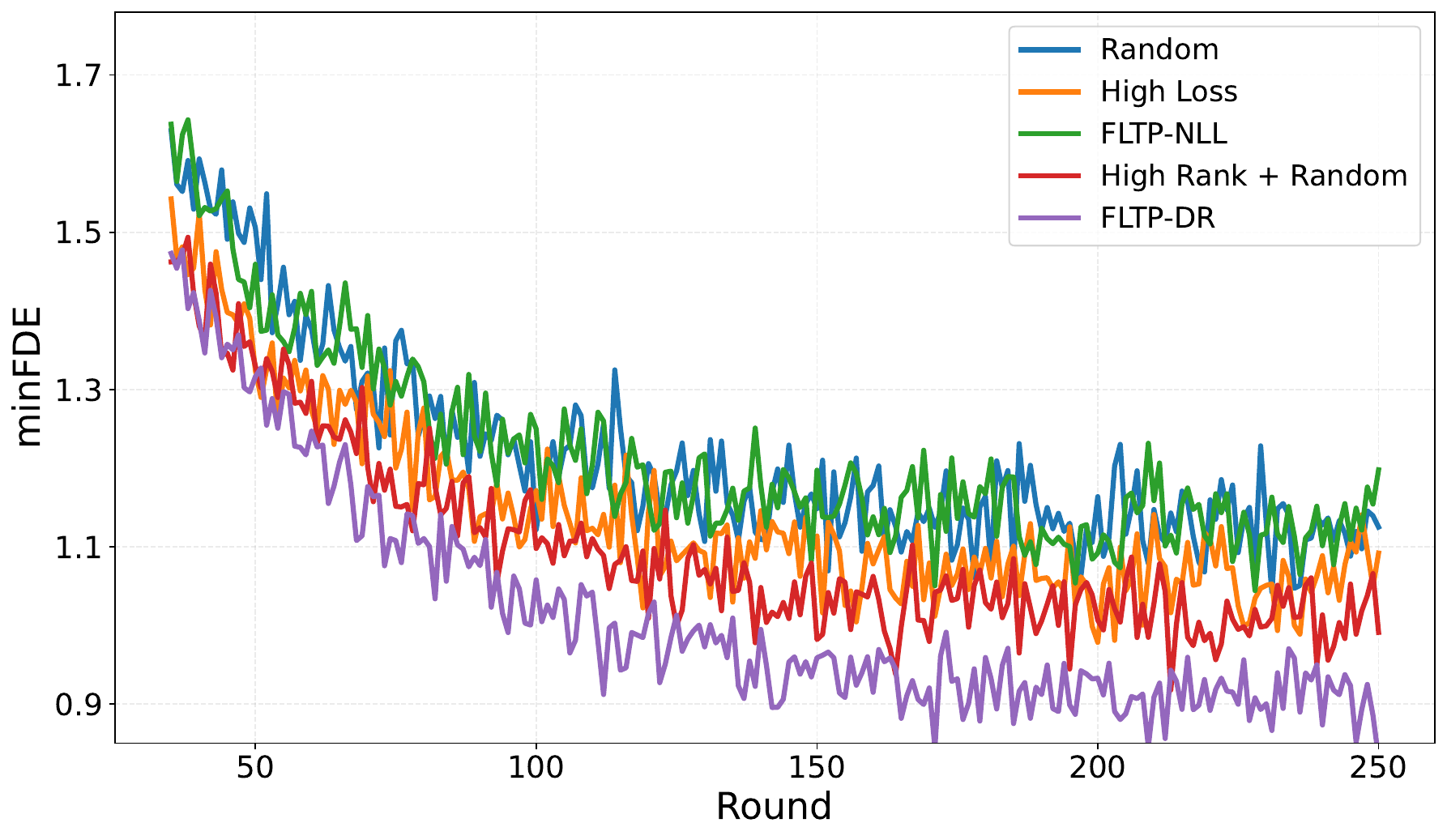}
}
\hfill
\subfloat[MR\label{fig:MR1}]{
    \includegraphics[width=0.31\textwidth]{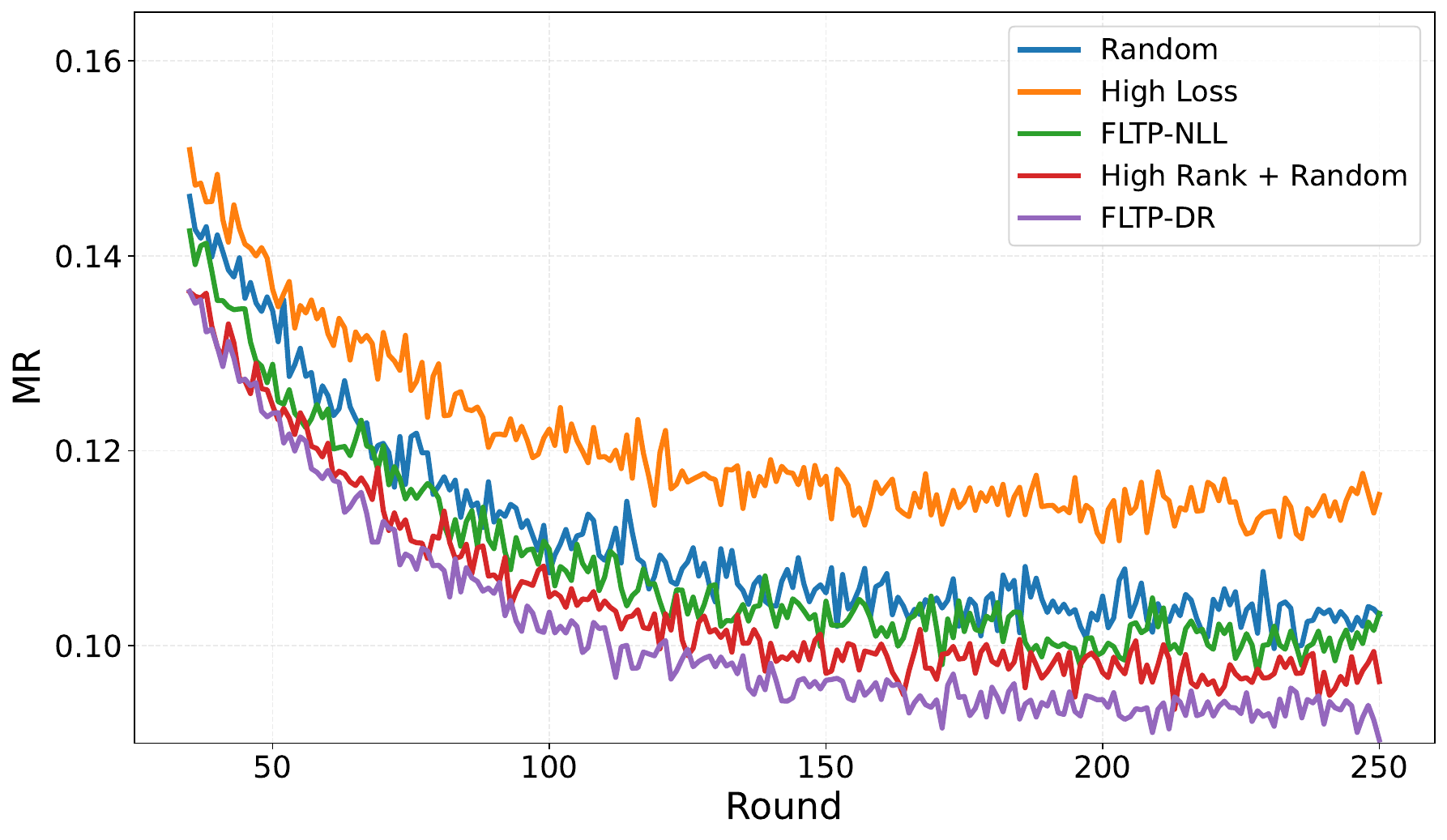}
}

\caption{Convergence behavior under $\alpha=1$.}
\label{fig:alpha1}
\end{figure*}

\subsection{Impact of Training and Validation Dataset  Complexity}

We examine how \emph{training-set  complexity} shapes robustness to trajectory dynamics under a \emph{fixed, mixed-complexity validation set} \(\mathcal{D}_{\mathrm{val}}^{\mathrm{mix}}\). 
Let
\[
\mathsf{Mix}(p_H,p_M,p_E)
\quad\text{with}\quad
p_H+p_M+p_E=1
\]
denote a training mixture drawn from the Hard/Moderate/Easy strata in proportions \((p_H,p_M,p_E)\). 
We vary the hard fraction \(n:=p_H\in\{0.25,0.50,0.75\}\), and place the remainder \((1-n)\) either on E (\(p_E=1-n\)) or on M (\(p_M=1-n\)). 
In our setup, higher  complexity correlates with richer scene or behavioral coverage—more (including sharp) turns, shorter distances to intersections, multi-lane merges, stop-and-go segments with larger speed variance, and denser agent interactions—whereas easier sequences are dominated by quasi-linear motion with fewer topology changes. 
Training on harder trajectories therefore exposes the forecaster to a broader set of kinematic regimes and map contexts, alleviating straight-line bias and improving transfer to simpler regimes that are effectively subsets of the complex ones.

As shown in Table~\ref{tab:consolidated}, Panel~A, increasing $p_H$ from $0.25$ to $0.75$ produces a consistent trend on the same validation distribution: both minADE and minFDE decrease, while MR also shows a downward tendency.  
For instance, shifting from $\mathsf{Mix}(0.25,0,0.75)$ to $\mathsf{Mix}(0.75,0.25,0)$ improves minADE from $0.738$ to $0.697$ (approximately $5.56\%$ relative), minFDE from $1.210$ to $1.096$ (approximately $9.42\%$), and MR from $0.119$ to $0.114$ (approximately $4.20\%$).  
When $n$ is fixed, assigning the remaining proportion to M is at least as effective, and often marginally more beneficial, than assigning it to E. For example, at $n{=}0.5$, $\mathsf{Mix}(0.50,0.50,0)$ compared with $\mathsf{Mix}(0.50,0,0.50)$ improves minADE by approximately $1.53\%$, minFDE by approximately $1.17\%$, and MR by approximately $3.39\%$.   

Overall, {\our} benefits from a larger share of challenging trajectories during training: the added maneuver and topology coverage improves generalization not only to the mixed validation set but also to its moderate or easy subsets.

Beyond varying \(p_H\) under a fixed validation set, we further disentangle the roles of training and validation  complexity. Figure~\ref{fig:complexity_analysis} summarizes two complementary views: 
(i) in Figure~\ref{fig:complexity_analysis}(a), we vary the \emph{training}  complexity (E/M/H) while validating on the same \(\mathcal{D}_{\mathrm{val}}^{\mathrm{mix}}\); 
(ii) in Figure~\ref{fig:complexity_analysis}(b), we fix training to a mixed-complexity set and vary the \emph{validation}  complexity. 
Both views corroborate a knowledge-transfer effect: exposure to harder scenes during training broadens maneuver coverage and yields improved accuracy and robustness, including in moderate and easy validation regimes.

\begin{table}[!t]
\footnotesize
\renewcommand{\arraystretch}{1.15}
\setlength{\tabcolsep}{5.5pt}
\caption{Consolidated results. Panel A: impact of training mixture $\mathsf{Mix}(p_H,p_M,p_E)$ on a fixed mixed- complexity validation set $\mathcal{D}_{\mathrm{val}}^{\mathrm{mix}}$. Panel B: effect of selection ratio $f_1$. Panel C: effect of balance parameter $n$ in {\our}.}
\label{tab:consolidated}
\centering
\begin{tabular*}{\columnwidth}{@{\extracolsep{\fill}} llccc}
\toprule
\textbf{Panel} & \textbf{Setting} & \textbf{minADE} & \textbf{minFDE} & \textbf{MR} \\\hline
\midrule
\multicolumn{5}{l}{\emph{Panel A: Training mixture } $\mathsf{Mix}(p_H,p_M,p_E)$} \\
& $\mathsf{Mix}(0.75,0,0.25)$  & 0.698 & 1.098 & 0.115 \\
& {\bf $\mathsf{Mix}(0.75,0.25,0)$}  & \textbf{0.697} & \textbf{1.096} & \textbf{0.114} \\
& $\mathsf{Mix}(0.50,0,0.50)$  & 0.721 & 1.110 & 0.118 \\
& $\mathsf{Mix}(0.50,0.50,0)$  & 0.710 & 1.097 & 0.114 \\
& $\mathsf{Mix}(0.25,0,0.75)$  & 0.738 & 1.210 & 0.119 \\
& $\mathsf{Mix}(0.25,0.75,0)$  & 0.721 & 1.118 & 0.115 \\\hline
\addlinespace 
\multicolumn{5}{l}{\emph{Panel B: Selection ratio } $f_1$} \\
& 10\%                           & 0.728 & 1.230 & 0.123 \\
& 20\%                           & 0.715 & 1.120 & 0.103 \\
& \textbf{30\% }       & \textbf{0.682} & \textbf{0.891} & \textbf{0.096} \\\hline
\addlinespace
\multicolumn{5}{l}{\emph{Panel C: Balance parameter } $\nu$} \\
& 0                              & 0.693 & 1.110 & 0.114 \\
& 10                             & 0.690 & 0.898 & 0.099 \\
& \textbf{15}                    & \textbf{0.682} & \textbf{0.891} & \textbf{0.096} \\
& 30                             & 0.703 & 1.080 & 0.112 \\
\bottomrule
\end{tabular*}
\end{table}

\begin{figure}[!t]
    \centering
    \includegraphics[width=\columnwidth]{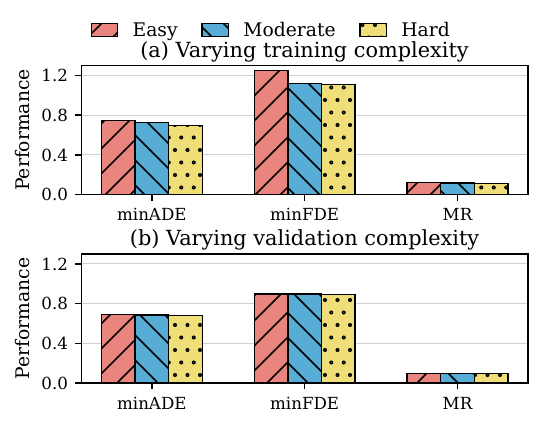}
    \caption{Impact of training and validation complexity on forecasting metrics. Training with \textbf{hard} workloads reduces minADE/minFDE, indicating that exposure to complex scenarios yields more robust representations. Evaluating on \textbf{hard} validation data is most demanding (and thus most discriminative), while performance remains strong on \textbf{easy}/\textbf{moderate} sets, evidencing effective generalization.}
    \label{fig:complexity_analysis}
\end{figure}

\subsection{Sensitivity Analysis of {\our}}

We assess how {\our} behaves under two knobs that matter in practice: the \emph{per-round participation budget} and the \emph{mixing rule} between complexity and loss.

\subsubsection{Impact of Selection Ratio ($f_1$)}
Let $C$ be the number of available clients and $\mathcal{L}_r=\lfloor f_1\,C\rfloor$ the number of selected clients per round. 
A larger $f_1$ increases \emph{scene diversity} per aggregation (more maps, traffic regimes, and behaviors), reduces the variance of the aggregated update, and stabilizes uncertainty estimates (AU/NLL) by averaging over more clients; however, it also incurs higher communication costs.
Table~\ref{tab:consolidated}, Panel~B, shows a clear trend: increasing participation from $10\%$ to $30\%$ monotonically lowers minADE/minFDE and MR. 
With $10\%$, the small cohort over-represents a few clients and impedes coverage of rare/hard scenes; $30\%$ yields broader coverage and better generalization, providing a good accuracy–cost trade-off in our setting.

\subsubsection{Impact of the Mixing Parameter ($\nu$) in Client Selection}
At round $r$, given candidate pool $\mathcal{C}_t$ and selection budget $\mathcal{L}_r$, the deterministic hybrid selector (Sec.~\ref{sec: data partition}) forms $\mathcal{L}_r$ participants \emph{without replacement} by combining complexity coverage and loss exploitation.

Table~\ref{tab:consolidated}, Panel~C, shows that extremes are suboptimal: 
$\nu{=}0$ (pure loss) tends to over-exploit noisy clients and under-cover systematically hard but informative regions; 
$\nu{=}|\mathcal{L}_r|$ (pure complexity) may repeatedly pick already well-fit hard clients, yielding weak gradients and slower error reduction. 
A \emph{balanced} mix (e.g., $\nu{=}15$ for $|\mathcal{L}_r|{=}30$) prioritizes clients that are both challenging and currently underfit, producing the best trade-off in minADE, minFDE, and MR.

\section{Conclusion}

In this paper, we first proposed a privacy-preserving and uncertainty-aware federated learning framework (FLTP) tailored for trajectory prediction tasks in connected autonomous vehicles, initially leveraging a global objective with uncertainty quantification. Building upon this, we further employed two uncertainty-aware metrics—negative log-likelihood (NLL) and aleatoric uncertainty (AU)—for adaptive client selection to optimize partial client participation. Extending these prior advancements, we presented {\our}, an enhanced adaptive federated learning method integrating hierarchical prediction networks (HPNet) to evaluate trajectory  complexity during data partitioning and client selection. {\our} effectively balances the contribution of high-rank and high-loss clients, significantly improving model generalization, robustness, and training efficiency.

Extensive experiments conducted on the Argoverse dataset consistently demonstrated the superior performance of {\our} compared to baseline methods across multiple metrics, including minADE, minFDE, and Miss Rate. Additionally, our sensitivity analysis confirmed that {\our} provides stable and improved performance under varying client selection ratios and different levels of trajectory  complexity. Future research will explore incorporating additional advanced prediction models and refining client selection strategies to further enhance federated learning's performance and efficiency in real-world autonomous driving applications.



\bibliographystyle{IEEEtran}
\bibliography{cumtom} 


\begin{IEEEbiographynophoto}{Yiming Xie}
is a Ph.D. student in the Department of Electrical and Computer Engineering at Northeastern University, Boston, Massachusetts. His research focuses on federated learning optimization schemes, data loader allocation for distributed deep learning, and NVMe over Fabrics SSDs.
\end{IEEEbiographynophoto}

\begin{IEEEbiographynophoto}{Muzi Peng}
was a Ph.D. student at Northeastern University, majoring in Electrical and Computer Engineering. He was primarily working on autonomous vehicle systems, with a particular focus on surrounding perception. He graduated with a B.S. in 2022 from Nanjing University.
\end{IEEEbiographynophoto}

\begin{IEEEbiographynophoto}{Fei Miao}
is the Pratt \& Whitney Associate Professor of the School of Computing and a Courtesy Faculty of the Department of Electrical and Computer Engineering at the University of Connecticut. 
She received her Ph.D. degree and the Best Doctoral Dissertation Award in Electrical and Systems Engineering, with a dual M.S. degree in Statistics from the University of Pennsylvania in 2016. She was a postdoctoral researcher at the GRASP Lab and the PRECISE Lab of UPenn from 2016 to 2017. 
\end{IEEEbiographynophoto}

\begin{IEEEbiographynophoto}{Ningfang Mi}
is an Associate Professor in the Department of Electrical and Computer Engineering at Northeastern University. She received her Ph.D. in Computer Science from the College of William and Mary in 2009 under the guidance of Prof. Evgenia Smirni, her M.S. in Computer Science from the University of Texas at Dallas in 2004, and her B.S. in Computer Science from Nanjing University in 2000. 
\end{IEEEbiographynophoto}

\begin{IEEEbiographynophoto}{Lili Su}
is an Assistant Professor in the Department of Electrical and Computer Engineering at Northeastern University. 
She received her M.Sc. (2014) and Ph.D. (2017) in Electrical and Computer Engineering from the University of Illinois at Urbana–Champaign (UIUC). Prior to joining Northeastern, she was a postdoctoral researcher at the Computer Science and Artificial Intelligence Laboratory (CSAIL) at MIT. 
\end{IEEEbiographynophoto}

\end{document}